\newtheorem{theorem}{Theorem}
\newtheorem{lemma}[theorem]{Lemma}
\newtheorem{corollary}[theorem]{Corollary}
\newcommand{\reflemma}[1]{Lemma \ref{#1}}
\newcommand{\refcor}[1]{Corollary \ref{#1}}
\newcommand{\refeqn}[1]{Equation (\ref{#1})}
\newcommand{\reffig}[1]{Figure \ref{#1}}
\newcommand{\norm}[1]{\left\lVert#1\right\rVert}
\newcommand{\tv}{\mathbb{D}_\mathcal{TV}}
\newcommand{\rhoa}{\rho_{0:t}^{\alpha_t}}
\newcommand{\rhotot}{\rho_{0:t}(s_{0:t}, a_{0:t})}
\DeclareMathOperator*{\E}{\mathbb{E}}
\DeclareMathOperator*{\V}{\mathbb{V}}
\DeclareMathOperator*{\C}{\mathrm{Cov}}
\icmltitlerunning{Policy Optimization Through Approximate Importance Sampling}
\begin{document}

\twocolumn[
\icmltitle{Policy Optimization Through Approximate Importance Sampling}



\icmlsetsymbol{equal}{*}

\begin{icmlauthorlist}
\icmlauthor{Marcin B. Tomczak}{cam}
\icmlauthor{Dongho Kim}{prowler}
\icmlauthor{Peter Vrancx}{prowler}
\icmlauthor{Kee-Eung Kim}{prowlerkaist}

\end{icmlauthorlist}

\icmlaffiliation{cam}{University of Cambridge (work partly done while at PROWLER.io)}
\icmlaffiliation{prowler}{PROWLER.IO}
\icmlaffiliation{prowlerkaist}{PROWLER.IO \& KAIST}

\icmlcorrespondingauthor{Marcin B. Tomczak}{mbt27@cam.ac.uk}

\icmlkeywords{Machine Learning, ICML}

\vskip 0.3in
]



\printAffiliationsAndNotice{}  

\begin{abstract}
Recent policy optimization approaches \citep{pmlr-v37-schulman15, DBLP:journals/corr/SchulmanWDRK17} have achieved substantial empirical successes by constructing new proxy optimization objectives. These proxy objectives allow stable and low variance policy learning, but require small policy updates to ensure that the proxy objective remains an accurate approximation of the target policy value. In this paper we derive an alternative objective that obtains the value of the target policy by applying importance sampling (IS). However, the basic importance sampled objective is not suitable for policy optimization, as it incurs too high variance in policy updates. We therefore introduce an approximation that allows us to directly trade-off the bias of approximation with the variance in policy updates. We show that our approximation unifies previously developed approaches and allows us to interpolate between them. We develop a practical algorithm by optimizing the introduced objective with proximal policy optimization techniques \citep{DBLP:journals/corr/SchulmanWDRK17}. We also provide a theoretical analysis of the introduced policy optimization objective demonstrating bias-variance trade-off. We empirically demonstrate that the resulting algorithm improves upon state of the art on-policy policy optimization on continuous control benchmarks.
\end{abstract}

\section{Introduction}

Policy gradient algorithms have achieved significant successes in reinforcement learning problems. Especially in continuous action settings policy gradient based methods provided a major milestone in achieving good empirical performance \citep{lillicrap2015continuous, Duan:2016:BDR:3045390.3045531}. Despite these results policy gradient approaches can still have significant drawbacks. The policy updates often suffer from high variance, which can result in requiring prohibitively large numbers of interactions with environment. Additionally, policy gradient algorithms require careful tuning of the update step size which can be difficult in practice. 

Recent policy optimization algorithms \citep{pmlr-v37-schulman15, DBLP:journals/corr/SchulmanWDRK17} have led to substantially improved the sample efficiency by optimizing a biased surrogate objective that has low variance. Optimizing this objective has been shown to stabilize learning \citep{Duan:2016:BDR:3045390.3045531, pmlr-v37-schulman15, pmlr-v70-achiam17a}. The bias incurred by using the surrogate objective can be controlled by restricting the divergence between the target and behavior policy. The algorithms derived in \cite{pmlr-v37-schulman15, DBLP:journals/corr/SchulmanWDRK17} perform small steps in policy space using the biased proxy objective.

In this paper, we derive a new policy optimization objective starting from the value of the target policy obtained through importance sampling (IS). Importance sampling  provides an unbiased estimate of the target policy's value using samples from the current policy. Unfortunately, the raw IS objective is a poor target for optimization as the variance of importance sampling can increase exponentially with the horizon. We therefore introduce an approximation that allows us to directly trade-off the bias of approximation with the variance in policy updates. We show that our approximation unifies the previous proxy optimization approaches with the pure importance sampling objective and allows us to interpolate between them. We demonstrate that the resulting algorithm significantly improves upon state of the art on-policy policy optimization on a number of continuous control benchmarks and exhibits more robustness to suboptimal choice of hyperparameters.

In addition to the empirical results, we also analyse the theoretical properties of the introduced policy optimization objective in terms of providing upper bounds on bias and variance. There are important gaps in our understanding of TRPO \citep{pmlr-v37-schulman15} and similar algorithms as only loose bounds on the introduced bias were provided. We also aim to analyse and extend the theory behind these algorithms by quantifying the exact error term for surrogate objective as opposed to merely providing an upper bound. The results we derive allow to obtain the results provided in several previous works as special cases, including \citet{pmlr-v37-schulman15, pmlr-v70-achiam17a, NIPS2017_6974}. Additionally, in the supplementary material we demonstrate that various policy gradient theorems \citep{Sutton:1999:PGM:3009657.3009806, pmlr-v32-silver14, ciosek2018expected} can be unified and proven by simply differentiating the equality we derive.

This paper is organized as follows. We establish the notation in the background section. We follow by presenting the related work and formerly obtained results. Next, we introduce a novel policy optimization algorithm interpolating between importance sampling and a previously used proxy policy optimization objective. We proceed by theoretical analysis of the introduced policy optimization objective and analyze its bias and variance. We conclude by presenting the empirical results achieved by the introduced algorithm on continuous control benchmarks. We provide implementation of the algorithm derived in this paper used to carry out experiments. \footnote{\url{https://github.com/marctom/POTAIS}}

\section{Background}
\label{sec:bg}
We begin by estabilishing notation. We assume a standard MDP formulation $(S, A, p, r, \gamma),$ with $S$ the set of states; $A$ the set of
 actions; $p:S\times A \times S \rightarrow  \mathbb{R}_{+}$ represents the transition model, where $p(s'|s, a)$ is the probability of transitioning to state $s'$ when $a$ is taken in $s$; $r:S \times A \rightarrow [-R_{max}, R_{max}]$ is the reward function and  $\gamma \in (0,1)$ is a discount factor. A policy $\pi: S \times A \rightarrow \mathbb{R}_{+}$ induces a Markov chain with transition matrix  $P_\pi$, i.e $P_\pi (s',s) = \int p(s'|s, a) \pi(a|s) da$.
We use $\tau$ to denote a state action trajectory $(s_0, a_0, s_1, a_1, \ldots)$. We use notation $\tau \sim \pi$ to denote that the trajectory is sampled by following policy $\pi$, i.e. $s_0 \sim \mu_0(\cdot)$, $a_t \sim \pi(\cdot|s_t)$ and $s_{t+1} \sim p(\cdot|s_t, a_t)$, where $\mu_0(\cdot)$ denotes the initial distribution over states. The value of a policy $\pi$ is defined by: 
\begin{equation}
    \eta(\pi) = \E_{\tau \sim \pi} \sum_{t \ge 0} \gamma^t r(s_t,a_t).
\end{equation}

By $\E_{\tau \sim \pi |s,a}$ we denote an expectation taken over the trajectories obtained by following policy $\pi$ that begin with state $s$ and action $a$, i.e. $\E_{\tau \sim \pi |s,a} \sum_{t \ge 0} \gamma^t f(s_t, a_t) =  \E_{\tau \sim \pi} \big [\sum_{t\ge 0} \gamma^t f(s_t, a_t) | s_0 = s, a_0 = a \big ]$. 
We use standard notation to denote the value function given state $s$, $V^\pi (s)= \E_{\tau \sim \pi} \big[ \sum_{t\ge 0} \gamma^t r(s_t,a_t) |s_0 = s \big]$, and similarly we denote state action value $Q^\pi(s,a)= \E_{\tau \sim \pi} \big[\sum_{t\ge 0} \gamma^t r(s_t,a_t) |s_0 = s, a_0 = a \big]$. The advantage of policy $\pi$ at state $s$ and action $a$ is defined as $A^\pi (s,a) = Q^\pi(s,a) - V^\pi(s)$.
Given the initial distribution over states $\rho_0$, we denote normalized discounted state occupancy measure by $d^\pi = (1-\gamma)\sum_{t \ge 0} \gamma^t P_\pi^t \mu_0$, so it follows that $d^{\pi}(s) = (1-\gamma) \sum_{t\ge 0} \gamma^t p(s_t = s| s_0 \sim \mu_0)$. 
We have that $\frac{1}{1-\gamma} \E_{s \sim d^\pi, a \sim \pi(\cdot|s)} f(s, a) =  \E_{\tau \sim \pi} \sum_{t \ge 0} \gamma^t f(s_t, a_t) $, where RHS can be estimated by performing rollouts from policy $\pi$.

We use $\rho_t(s_t, a_t) = \frac{\tilde\pi(a_t|s_t)}{\pi(a_t|s_t)}$ to denote importance sampling ratios and $\rho_{0:t}(s_{0:t}, a_{0:t}) = \prod_{i=0}^t \frac{\tilde\pi(a_i|s_i)}{\pi(a_i|s_i)}$ to denote their product. Throughout the paper we use $\tilde\pi$ to denote the target policy we aim to optimize and $\pi$ the behavior policy that was used to gather the current batch of data.
Lastly, we denote the total variation distance between two distributions $p$ and $q$ as $\tv\big(p(\cdot)||q(\cdot)\big):= \frac{1}{2} \int |p(x) - q(x)| dx$. 

\section{Prior work}
\label{sec:related}
In this section we state well-known results linking the value of target policy $\eta(\tilde\pi)$ with the value of data gathering policy $\eta(\pi)$.
\citet{Kakade:2002:AOA:645531.656005} establish the expression for the difference of values between two arbitrary policies $\tilde\pi$ and $\pi$. They show that for any two policies $\pi$ and $\tilde\pi$, the difference of values $\eta(\tilde \pi) - \eta(\pi)$ can be expressed as:
\begin{equation}
\label{eqn:lem_kakade}
\eta(\tilde \pi) - \eta(\pi) = \frac{1}{1 - \gamma}\E_{s \sim d^{\tilde\pi}, a \sim \tilde \pi(\cdot|s)}  A^\pi (s,a).
\end{equation}

Note \refeqn{eqn:lem_kakade} cannot be used directly for policy optimization  as the RHS of \refeqn{eqn:lem_kakade} requires samples from $d^{\tilde\pi}$, the discounted state occupancy measure of the target policy $\tilde\pi$. Following \citet{Kakade:2002:AOA:645531.656005, pmlr-v37-schulman15}, we can replace the occupancy measure $d^{\tilde\pi}$ with $d^{\pi}$ to define an approximation $L_\pi(\tilde\pi)$:
 
 \begin{equation}
 L_{\pi}(\tilde\pi) = \frac{1}{1-\gamma} \E_{s \sim d^\pi, a \sim \tilde \pi(\cdot|s)} A^\pi(s,a),
 \end{equation}

where we have slightly departed from the notation in \citet{pmlr-v37-schulman15}, by using $L_{\pi}(\tilde\pi)$ to denote the proxy for the difference $\eta(\tilde\pi) - \eta(\pi)$ instead of the value $\eta(\tilde\pi)$.

The quantity $L_{\pi}(\tilde\pi)$ can be estimated in practice and forms a backbone for modern policy optimization algorithms \citep{pmlr-v37-schulman15, DBLP:journals/corr/SchulmanWDRK17}. However, changing the occupancy measure in \refeqn{eqn:lem_kakade} introduces error which needs to be quantified. \citet{Kakade:2002:AOA:645531.656005} provided a lower bound on the value of policy being a linear combination of a target policy and behavior policy, $\forall_{s \in S} \ \tilde \pi (\cdot|s)  = (1-\alpha) \pi(\cdot|s) + \alpha \pi' (\cdot|s)$. They demonstrate that given behavior policy $\pi$, target policy $\pi'$, and their linear combination $\tilde \pi (\cdot|s) := (1-\alpha) \pi(\cdot|s) + \alpha \pi' (\cdot|s)\  \forall s \in S$, the following bound holds for $\alpha \in [0,1]$:

\begin{equation}
\eta(\tilde\pi) - \eta(\pi) \ge \alpha L_\pi(\pi')- \frac{2\alpha^2\epsilon\gamma}{(1-\gamma)(1- \gamma(1-\alpha))},
\end{equation}
where $\epsilon = \max_{s \in \mathcal{S}} |\E_{a \sim \tilde\pi} A^\pi (s,a)|$.

\citet{pmlr-v37-schulman15} follow on these results by providing the bound that is valid for any policies $\tilde\pi$ and $\pi$:
\begin{equation}
\label{eq:schulman_bound}
\eta(\tilde\pi) - \eta(\pi) \ge L_{\pi}(\tilde \pi) - \frac{4 \epsilon \gamma \Delta^2}{(1-\gamma)^2},
\end{equation}

where we denote $\Delta = \max_{s\in S} \tv \big (\pi(\cdot|s)|| \tilde\pi(\cdot|s)\big )$ and $\epsilon = \max_{s \in \mathcal{S}} |\E_{a \sim \tilde\pi} A^\pi (s,a)|$. The practical version of the algorithm derived in \citet{pmlr-v37-schulman15} resorts to maximising $L_{\pi}(\tilde\pi)$ w.r.t. $\tilde\pi$ in a neighbourhood of $\pi$, as the authors note that the obtained bound is too loose for the practical use. \citet{pmlr-v70-achiam17a} further improves the bound given \refeqn{eq:schulman_bound} by replacing maximum operator with expectation. They show that the for any policies $\tilde\pi$ and $\pi$ the following bound holds:
\begin{equation}
\eta(\tilde\pi) - \eta(\pi) \ge L_{\pi}(\tilde \pi) - \frac{2 \epsilon \gamma \delta}{(1-\gamma)^2},
\end{equation}
where we denote $\delta := \E_{s\sim d^\pi} \tv \big (\pi(\cdot|s) || \tilde\pi(\cdot|s)\big)$ and $\epsilon = \max_{s \in \mathcal{S}} |\E_{a \sim \tilde \pi(\cdot|s)} A^\pi (s,a)|$. The presented bounds indicate that $L_\pi(\tilde\pi)$ becomes a good proxy for $\eta(\tilde\pi) - \eta(\pi)$ when policies $\pi$ and $\tilde\pi$ are close in terms of some form of divergence. The benefit of introducing $L_\pi(\tilde\pi)$ is that it provides a biased but low variance surrogate for $\eta(\tilde\pi) - \eta(\pi)$. The bias can be controlled by restricting the proximity of $\tilde\pi$ and $\pi$. Algorithms optimizing $L_\pi(\tilde\pi)$ subject to some constraints have seen substantial empirical success \citep{pmlr-v37-schulman15, DBLP:journals/corr/SchulmanWDRK17, Duan:2016:BDR:3045390.3045531}.

Importance sampling has a long history of begin applied in off-policy evaluation \citep{pmlr-v48-thomasa16, Precup00eligibilitytraces} and also used in context of PAC bounds for both off-policy evaluation \citep{Thomas:2015:HCO:2888116.2888134} and policy optimization \citep{pmlr-v37-thomas15}. \citet{NIPS2018_7789} optimize the value of the policy $\eta(\tilde\pi)$ obtained via importance sampling while regularizing divergence between $\tilde\pi$ and $\pi$ to obtain low variance in the estimated value $\eta(\tilde\pi)$. Lowering the variance of IS estimator has been considered for value function estimation from off-policy data \cite{mahmood15, retrace, pmlr-v80-espeholt18a}.

Multiple approaches employ replay buffer \citep{mnih2013playing} to learn state action value function \citep{pmlr-v32-silver14, td3} to improve sample efficiency of learning. Although this approach can lead to good empirical performance \citep{Duan:2016:BDR:3045390.3045531} it can also suffer from stability and possibly diverge  by simultaneously applying bootstraping, off-policy learning and function approximation \citep{rl_book, Hasselt2018DeepRL}. Additionally using replay buffer significantly complicates theoretical analysis as introduced biases in general depend on all policies supplying data to buffer. In this work we focus on a setting in which only data incoming from current policy $\pi$ is used to learn target policy $\tilde\pi$.

\section{Approximate Importance Sampling}
\label{sec:approx_is}

\begin{table*}[ht]
\hspace*{1mm}
\scriptsize
\begin{tabular}[th]{lcccc}
\hline
&$f(s_{0:t}, a_{0:t})$& $\alpha_t$ & bias & variance\\
\hline
IS & $\rhotot A^\pi(s_t,a_t)$ & $(1,1, \ldots, 1)$ & no & high \\
TRPO & $\frac{\tilde\pi(a_t|s_t)}{\pi(a_t|s_t)}A^\pi(s_t,a_t)$   &  $(0,0, \ldots, 0, 1)$ & yes & low  \\
PPO & $ \min(\phi^\omega \big (\frac{\tilde\pi(a_t|s_t)}{\pi(a_t|s_t)}\big ) A^\pi(s_t, a_t), \frac{\tilde\pi(a_t|s_t)}{\pi(a_t|s_t)} A^\pi(s_t, a_t))$   &  - & yes & low  \\
This paper & $ \min(\phi^\omega \big (\prod_{i=1}^t \big ( \frac{\tilde\pi(a_i|s_i)}{\pi(a_i|s_i)} \big ) ^{\alpha_t^i} \big ) A^\pi (s_t, a_t), \prod_{i=1}^t \big ( \frac{\tilde\pi(a_i|s_i)}{\pi(a_i|s_i)} \big ) ^{\alpha_t^i} A^\pi(s_t, a_t))$   & any $\alpha_t \in [0,1]^{t+1}$ & depends on $\alpha_t$ & depends on $\alpha_t$ \\
\hline
\end{tabular}

\caption{Comparison of various functions used to approximate the value of target policy, $\eta(\tilde\pi)$. Importance sampling is unbiased, but introduces excessive variance. TRPO reduces the variance by
truncating products of importance sampling ratios, but this introduces bias. PPO provides a stable optimization procedure to optimize $L_\pi(\tilde\pi)$ in the proximity of $\pi$. Our objective allows to interpolate between TRPO and IS and in turn trade off bias and variance.}
\label{tab:caption}
\end{table*}

Recent policy optimization approaches \citep{pmlr-v37-schulman15, DBLP:journals/corr/SchulmanWDRK17} optimize the proxy objective $L_\pi(\tilde\pi)$ by keeping $\tilde\pi$ in proximity of $\pi$ where $L_\pi(\tilde\pi)$ remains an accurate approximation.  Alternatively, the value of target policy $\eta(\tilde\pi)$ can be obtained by applying importance sampling. 

Given a function $f:S \times A \rightarrow \mathbb{R}$ and two policies $\tilde\pi$ and $\pi$, the expectation $\E_{\tau \sim \pi} \sum_{t\ge 0} \gamma^t f(s_t, a_t)$ can be estimated with the step-based IS estimator \citep{Precup00eligibilitytraces, pmlr-v48-jiang16}:
\begin{equation}
    \label{eqn:is}
    \E_{\tau \sim \tilde\pi} \sum_{t\ge 0} \gamma^t f(s_t, a_t) = \E_{\tau \sim \pi} \sum_{t \ge 0} \gamma^t  \rhotot f(s_t, a_t),
\end{equation}

By applying the equality from \refeqn{eqn:is}  to change the discounted occupancy measure in the RHS of \refeqn{eqn:lem_kakade} from $d^{\tilde\pi}$ to $d^\pi$, we get: 
\begin{equation}
\label{eqn:kakade_is}
    \eta(\tilde\pi) - \eta(\pi) = \E_{\tau \sim \pi} \sum_{t \ge 0 } \gamma^t \rhotot A^\pi(s_t, a_t).
\end{equation}

The RHS of \refeqn{eqn:kakade_is} can be estimated with samples as the expectation is taken over trajectories coming from current behavior policy $\pi$. However, \refeqn{eqn:kakade_is} is not suitable for policy optimization algorithms as the variance of importance sampling ratio products $ \rhotot $ increases exponentially with the horizon \citep{pmlr-v48-jiang16, retrace}. In this section we introduce an objective function akin to $L_\pi(\tilde\pi)$ that allows to trade-off the bias of approximation with the variance in policy updates.

We begin with the observation that the definition of $L_\pi(\tilde\pi)$ can be seen as approximating products of importance sampling weights $ \rhotot$ with only the last term, $ \rhotot \approx \frac{\tilde\pi(a_t|s_t)}{\pi(a_t|s_t)}$. This approximation significantly reduces variance, but also introduces bias. Thus TRPO defines: 

\begin{equation}
   f^{TRPO}(s_{0:t}, a_{0:t}) := \frac{\tilde\pi(a_t|s_t)}{\pi(a_t|s_t)}A^\pi(s_t,a_t), 
\end{equation}
while the IS estimator defines:

\begin{equation}
    f^{IS}(s_{0:t}, a_{0:t}):= \rhotot A^\pi(s_t,a_t).
\end{equation}

 In both cases, $\eta(\tilde\pi) - \eta(\pi)$ is then estimated by the expectation $\E_{\tau \sim \pi} \sum_{t \ge 0} \gamma^t f(s_{0:t}, a_{0:t})$. 
 This can be seen as resorting to two extremes: constructing a biased estimator with low variance or an unbiased estimator with high variance. To unify these approaches and interpolate between them we introduce the following function $f^\alpha$:
\begin{equation}
    \label{eqn:f_def}
    f^{\alpha}(s_{0:t}, a_{0:t}) := \prod_{i=1}^t \Bigg ( \frac{\tilde\pi(a_i|s_i)}{\pi(a_i|s_i)} \Bigg ) ^{\alpha_t^i} A^\pi (s_t, a_t),
\end{equation}
where $\alpha_t$ are vectors of length $t+1$ with coordinates $\alpha^i_t \in [0,1]$. Using \refeqn{eqn:f_def} allows us to construct the following approximation of $\eta(\tilde\pi) - \eta(\pi)$:
\begin{equation}
\label{eqn:l_def}
L_\pi^\alpha(\tilde\pi) = \E_{\tau \sim \pi} \sum_{t \ge 0 } \gamma^t \prod_{i=1}^t \Bigg ( \frac{\tilde\pi(a_i|s_i)}{\pi(a_i|s_i)} \Bigg ) ^{\alpha_t^i} A^\pi (s_t, a_t).
\end{equation}
Note that using $\alpha_t = (0,0, \ldots, 0, 1)$, $\forall t \ge 0$ corresponds to definition of $L_\pi(\tilde\pi)$ and using $\alpha_t = (1,1, \ldots, 1)$, $\forall t \ge 0$ recovers importance sampling. 
Intermediate values of $\alpha_t$ will trade off bias and variance. To see this we take the weighted power mean of $\rho_t(s_t,a_t) = \frac{\tilde{\pi}(a_t|s_t)}{\pi(a_t|s_t)}$ and 1 (i.e. $\tilde{\pi}(a_t|s_t) \approx \pi(a_t|s_t))$ with respect to weight $\alpha_t^i$ so that 
\begin{equation*}
    \hat{\rho}_t(s_t,a_t) = \rho_t(s_t,a_t)^{\alpha_t^i} \cdot 1^{(1-\alpha_t^i)}
    = \rho_t(s_t,a_t)^{\alpha_t^i},
\end{equation*}
for all $t$. Hence,
\begin{equation}
     \rho_{0:t}^{\alpha_t}(s_{0:t}, a_{0:t}) := \prod_{i=1}^t \Bigg ( \frac{\tilde\pi(a_i|s_i)}{\pi(a_i|s_i)} \Bigg ) ^{\alpha_t^i},
\end{equation}
which leads to the definition of function $f^\alpha$ in \refeqn{eqn:f_def}.
Note that when $\alpha \rightarrow 0$, $ f^{\alpha}(s_{0:t}, a_{0:t}) $ converges to a constant independent of $\tilde\pi$. This gives an estimator with large bias, but no variance. In fact, when $\alpha_t=(0,0,\ldots,0)$ for every $t\ge 0$ then
\begin{equation}
    \E_{a_t \sim \pi (\cdot|s_t)} f^{\alpha_t}(s_{0:t}, a_{0:t})  = \E_{a_t \sim \pi (\cdot|s_t)} A^\pi (s_t,a_t) = 0.
\end{equation}

Using $\prod_{i=1}^t \big ( \frac{\tilde\pi(a_i|s_i)}{\pi(a_i|s_i)} \big )^{\alpha_t^i} = e^{\sum_{i=1}^t \alpha_t^i \log \rho_{i}(s_i, a_i)}$ can be also viewed as temperature smoothing \citep{Kirkpatrick671} with temperatures $T_i=(\alpha_t^i)^{-1}$. High temperatures cause terms $ e^{\frac{\log\rho_i(s_i, a_i)}{T_i}}$ to be uniform and do not influence policy optimization.

To obtain further insights we analyse the gradient of $f^{\alpha_t} (s_{0:t}, a_{0:t})$ with respect to the parameters of policy $\tilde\pi$:
\begin{align}
\label{eqn:grad}
    &\nabla f^{\alpha_t} (s_{0:t}, a_{0:t}) = \nabla e^ {\sum_{i=0}^t \alpha_t^i \log \frac{\tilde\pi(a_i|s_i)}{\pi(a_i|s_i)}} A^\pi (s_t, a_t)  \nonumber\\  
    & = e^ {\sum_{i=0}^t \alpha_t^i \log \frac{\tilde\pi(a_i|s_i)}{\pi(a_i|s_i)}} \nabla \sum_{i=0}^t \alpha_t^i \log \frac{\tilde\pi(a_i|s_i)}{\pi(a_i|s_i)}  A^\pi (s_t, a_t) \nonumber \\ 
    & = \prod_{i=0}^t \rho_{i}(s_i, a_i)^{\alpha_t^i} \sum_{i=0}^t \alpha_t^i \frac{\pi(a_i|s_i)}{\tilde\pi(a_i|s_i)} \nabla \tilde\pi(a_i|s_i)  A^\pi (s_t, a_t).
\end{align}
Since importance sampling ratios $\rho_t(s_t) = 1$ when $\tilde\pi = \pi$, evaluating the gradient $\nabla f^{\alpha_t} (s_{1:t}, a_{1:t})$ at $\tilde\pi = \pi$ results in
\begin{equation}
    \nabla f^\alpha (s_{0:t}, a_{0:t}) \Big |_{\tilde\pi = \pi} =  \sum_{i=0}^t \alpha_t^i \nabla \tilde\pi(a_i|s_i)  A^\pi (s_t, a_t).
\end{equation}
As the expectation on RHS of \refeqn{eqn:l_def} is taken over trajectories from policy $\pi$ which does not depend on $\tilde\pi$, the gradient of the objective $L_\pi^\alpha(\tilde\pi)$ evaluated at $\tilde\pi = \pi$ reads
\begin{equation}
    \nabla L_\pi^\alpha (\tilde\pi)\Big |_{\tilde\pi=\pi}  = \E_{\tau \sim \pi} \sum_{t \ge 0}\gamma^t \sum_{i=0}^t \alpha_t^i \nabla \tilde\pi(a_i|s_i)  A^\pi (s_t, a_t),
\end{equation}
which can be rewritten as:
\begin{equation}
\label{eqn:obj_grad}
    \nabla L_\pi^\alpha (\tilde\pi)\Big |_{\tilde\pi=\pi} = \E_{\tau \sim \pi} \sum_{t \ge 0} \gamma^t \nabla\tilde\pi(a_t|s_t) \sum_{i \ge t}  \gamma^{i} \alpha^t_i A^\pi (s_i, a_i).
\end{equation}
Since we have that the telescopic sum $\sum_{i = t}^T \gamma^i A^\pi(s_t, a_t) =  \sum_{i=t}^T \gamma^i r(s_t,a_t) + \gamma^{T+1}V(s_{T+1}) -  \gamma^t  V^\pi(s_t)$, in the case where $\tilde\pi = \pi$ the introduced weighting by $\alpha_t$ can be viewed as interpolating between using Monte Carlo returns and value function $V^\pi(s_t)$ to estimate the returns for policy gradient. Also, note that in the case $\tilde\pi = \pi$, we have that $\nabla L_\pi^\alpha (\tilde\pi)\big |_{\tilde\pi=\pi}  = \nabla \eta(\tilde\pi)|_{\tilde\pi=\pi}$ for any selection of $\alpha_t$. We extend this reasoning to a setting in which $\tilde\pi \ne \pi$.

Smoothing by $\alpha_t$ modifies the policy optimization target $L^\alpha_\pi(\tilde\pi)$. During policy optimization we also need to consider the domain of policy $\tilde\pi$ over which we optimize as optimization can attempt to set individual importance sampling ratios $\rho_t$ to extreme values.
Hence we are still required to constrain the divergence $\tv(\pi(\cdot|s) || \tilde \pi(\cdot|s))$ during optimization to prevent $\tilde\pi$ from diverging.
To optimize $L_\pi^\alpha(\tilde\pi)$ in a stable manner we adopt the clipping scheme from \citet{DBLP:journals/corr/SchulmanWDRK17}. We clip the values of $\prod_{i=1}^t \big ( \frac{\tilde\pi(a_i|s_i)}{\pi(a_i|s_i)} \big ) ^{\alpha_t^i}$ to be within the range of $(1-\omega, 1+\omega )$ and take the minimum with $f^\alpha(s_{0:t},a_{0:t})$ to ensure the resulting function is a lower bound to $f^\alpha(s_{0:t},a_{0:t})$. 
We view employing clipping as obtaining an approximate solution to the constrained optimization problem $\max_{\tilde\pi} L^\alpha_\pi(\tilde\pi)$ s.t. $\E_{\tau \sim \pi} \tv (\pi(\cdot|s) || \tilde \pi(\cdot|s)) < \epsilon$ which is necessary to execute optimization in a practical setting \citep{DBLP:journals/corr/SchulmanWDRK17}. 
While KL regularization can be used for this purpose, we select clipping as it is reported to yield better empirical performance \citep{DBLP:journals/corr/SchulmanWDRK17}. This results in the following definition of $ f^\alpha_{\phi^\omega}(s_{0:t}, a_{0:t})$:

\begin{align}
    f^\alpha_{\phi^\omega}(s_{0:t}, a_{0:t}) = \min \big [ & \phi^\omega(\hat \rho_{0:t}^{\alpha_t}) A^\pi(s_t, a_t), f^\alpha(s_{0:t}, a_{0;t}) \big],
\end{align}

where $\phi^\omega$ denotes a clipping function with the range of $(1-\omega, 1+\omega)$. Next, the corresponding policy optimization objective $L_\pi^{\alpha, \phi^\omega}$ is defined as
\begin{equation}
    L_\pi^{\alpha, \phi^\omega}(\tilde\pi) = \E_{\tau \sim \pi} \sum_{t \ge 0} \gamma^t f^{\alpha_t}_{\phi^\omega}(s_{0:t}, a_{0:t}).
\end{equation}
To obtain a practical algorithm we approximate $\nabla L_\pi^{\alpha, \phi^\omega}(\tilde\pi)$ with subsampled transitions. We collect the set of transitions $\{(s_t, a_t, r_t, s_{t+1} ) \}_{t=1}^T$ we subsample a minibatch of timesteps $B$, and approximate the gradient $\nabla L_\pi^{\alpha, \phi^\omega}(\tilde\pi) \approx  |B|^{-1} \sum_{t \in B} \gamma^t \nabla f^\alpha_{\phi^\omega}(s_{0:t}, a_{0:t})$. We use Generalized Advantage Estimation \citep{schulman2015high} to approximate the advantage function $A^\pi(s_t, a_t)$ with $\hat A^\pi (s_t, a_t)$. As the derived algorithm performs approximate importance sampling, we call it Approximate IS Policy Optimization. The resulting policy optimization procedure is summarized as Algorithm 1. We provide comparison of Approximate IS Policy Optimization with existing policy optimization approaches in Table 1.

Note that in general we face three sources of possible bias in updates of policy $\tilde\pi$ incoming from: (i) learning approximate critic $\hat A^\pi(s,a)$ using function approximation and bootstraping to lower the variance of policy updates \cite{schulman2015high, tomczak2019compatible}, (ii) clipping applied to define $f^\alpha_{\phi^\omega}(s_{0:t}, a_{0:t})$ to prevent individual importance weights $\rho_t$ from exploding during optimization and (iii) bias due to use of approximate occupancy measure $d^\pi$. PPO can control bias (i) by adjusting the $\lambda$ parameter in GAE critic and can control bias (ii) by increasing the clip value $\omega$ or reducing the number of optimization steps to learn policy $\tilde\pi$. However, PPO cannot correct for the bias incoming from employing $d^\pi$ as an occupancy measure. As importance sampling ratios are multiplied this type of bias can increase exponentially even if $\pi$ and $\tilde\pi$ remain close. The approach we develop enables us to control bias of type (iii) by adjusting $\alpha_t$. Therefore Approximate IS Policy Optimization can trade-off total bias in policy updates with their variance.

\section{Theoretical Analysis}

\label{sec:theoretical_analysis}

We begin by presenting theoretical analysis of the quality of the introduced estimator $L_\pi^\alpha(\tilde\pi)$ in terms of providing upper bounds on a bias and variance on its estimator. We defer all proofs and discussion of assumptions of the subsequent results to the Appendix. We drop the dependency on the $s_{0:t}$ and $a_{0:t}$ when it is clear from the context. We first analyse the properties of random variable $\hat L_{\pi}^\alpha(\tilde \pi): =    \sum_{t \ge 0} \gamma^t \rhoa (s_{0:t}, a_{0:t})  A^{\pi}(s_t, a_t)$.
We begin with the following Lemma providing the upper bound on $\mathbb{V} [\hat L_{\pi}^\alpha(\tilde \pi)]$ in terms of norm $\norm{\alpha_t}_1$.

\begin{lemma}
\label{lem:L_var}
Consider truncated horizon estimator defined by $\hat L_{\pi, T}^\alpha(\tilde \pi): =    \sum_{t = 0}^T \gamma^t \rhoa (s_{0:t}, a_{0:t})  A^{\pi}(s_t, a_t)$. 
Assume that $\forall_{s \in \mathcal{S}, a \in \mathcal{A}} \  \frac{\tilde \pi(a|s)}{\pi(a|s)} < C_\rho$. The variance $\mathbb{V}_{\tau \sim \pi}[ L_{\pi, T}^\alpha(\tilde \pi)]$ is upper bounded as
\begin{align}
    & \V_{\tau \sim \pi}[ L_{\pi, T}^\alpha(\tilde \pi)] \le \frac{1 - \gamma^{T+1}}{1-\gamma} \epsilon^2
    \sum_{t=0}^T \gamma^{t} C_\rho^{2\norm{\alpha_t}_1},
\end{align}
 where $\epsilon = \max_{s \in \mathcal{S},a \in \mathcal{A}} |A^\pi (s,a)|$.
\end{lemma}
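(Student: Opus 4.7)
The plan is to bound the variance by the raw second moment and then use Cauchy--Schwarz with a deliberate split of the discount factor, so that the geometric sum $\sum_t \gamma^t$ appears as a prefactor instead of the loose $(T+1)$ that a naive $\V[\sum X_t]\le (T+1)\sum \V[X_t]$ would give. Concretely, I would start from
$$\V_{\tau\sim\pi}[\hat L_{\pi,T}^\alpha(\tilde\pi)] \le \E_{\tau\sim\pi}\Bigl[\Bigl(\sum_{t=0}^T \gamma^t \rhoa A^\pi(s_t,a_t)\Bigr)^2\Bigr],$$
write each $\gamma^t$ as $\gamma^{t/2}\cdot\gamma^{t/2}$, and apply Cauchy--Schwarz in $\mathbb{R}^{T+1}$ to the vectors $(\gamma^{t/2})_t$ and $(\gamma^{t/2}\rhoa A^\pi(s_t,a_t))_t$. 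This produces the deterministic geometric factor $(1-\gamma^{T+1})/(1-\gamma)$, which pulls out of the expectation of the squared terms.

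Next I would bound the remaining expectation $\E_{\tau\sim\pi}\bigl[\sum_t \gamma^t (\rhoa)^2 A^\pi(s_t,a_t)^2\bigr]$ pointwise by interchanging sum and expectation and applying the hypotheses. The advantage factor is immediately $\le \epsilon^2$. For the importance weight, each base $\rho_i:=\tilde\pi(a_i|s_i)/\pi(a_i|s_i)$ lies in $[0,C_\rho]$ and each exponent $\alpha_t^i\in[0,1]$, so $\rho_i^{\alpha_t^i}\le C_\rho^{\alpha_t^i}$ (immediate when $\rho_i\ge 1$; when $\rho_i<1$ we use $\rho_i^{\alpha_t^i}\le 1\le C_\rho^{\alpha_t^i}$, relying on $C_\rho\ge 1$). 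Multiplying over $i$ gives $(\rhoa)^2\le C_\rho^{2\norm{\alpha_t}_1}$ deterministically, so each summand is bounded by $\gamma^t\epsilon^2 C_\rho^{2\norm{\alpha_t}_1}$ and assembling everything yields exactly the right-hand side in the lemma.

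The main obstacle is recognising that the variance estimate must be carried out in a way that preserves the discounted structure: a routine $\V[\sum]\le(T+1)\sum\V$ would introduce a linear-in-$T$ factor and blow up as $T\to\infty$, whereas the Cauchy--Schwarz split of $\gamma^t$ keeps everything controlled by the finite geometric series $\sum_t\gamma^t$ that matches the stated bound. Beyond that, the only delicate bookkeeping is the exponent inequality $\rho_i^{\alpha_t^i}\le C_\rho^{\alpha_t^i}$ when $\rho_i$ may be below $1$, which needs separate treatment of the two regimes; the remaining manipulations are deterministic arithmetic that commutes with the outer expectation and do not require exploiting any cross-term cancellation between different values of $t$.
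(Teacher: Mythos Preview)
Your proposal is correct and follows essentially the same route as the paper: bound the variance by the second moment, split $\gamma^t=\gamma^{t/2}\gamma^{t/2}$ and apply Cauchy--Schwarz to extract the geometric factor $(1-\gamma^{T+1})/(1-\gamma)$, then bound $(\rhoa)^2 A^\pi(s_t,a_t)^2$ pointwise by $\epsilon^2 C_\rho^{2\norm{\alpha_t}_1}$. Your handling of the case $\rho_i<1$ via $C_\rho\ge1$ is in fact more careful than the paper, which asserts $\rho_i^{\alpha_t^i}\le C_\rho^{\alpha_t^i}$ without comment.
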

The significance of \reflemma{lem:L_var} is that once sparse vectors $\alpha_t$ with uniformly bounded number of non-zero components $K$ are employed we have that $\forall t \ge 0 \ \norm{\alpha_t}_1 \le K$ and the variance $\V_{\tau \sim \pi}[ L_{\pi, T}^\alpha(\tilde \pi)]$ is guaranteed to be finite and uniformly bounded by $\frac{\epsilon^2 C_\rho^{2K}}{(1-\gamma)^2}$. Note that if introduced smoothing by $\alpha_t$ is not used we have that $\V_{\tau \sim \pi} [\sum_{t \ge 0} \gamma^t \rho_{0:t}] \ge \sum_{t \ge 0} \gamma^{2t} \V_{\tau \sim \pi}[ \rho_{0:t}]$ which requires
$\V_{\tau \sim \pi}[\rho_{0:t}] < \gamma^{-2t}$ for convergence.
Hence the variance of step importance sampling estimator can diverge to infinity \citep{retrace} as opposed to $L_{\pi, T}^\alpha(\tilde \pi)$ when appropriate $\alpha_t$ are chosen. We follow by quantifying the bias of estimator  $ \hat L_\pi^\alpha(\tilde\pi) $.

\begin{lemma}
\label{lem:L_bias}
The absolute bias of $ \hat L_\pi^\alpha(\tilde\pi) $ i.e. $\mathbb{B}[\hat L_\pi^\alpha(\tilde\pi)] := |\eta(\tilde\pi) - \eta(\pi) - \E_{\tau \sim \pi }\hat L_\pi^\alpha(\tilde\pi)|$ is upper bounded by:
\begin{align}
    &\mathbb{B}[\hat L_\pi^\alpha(\tilde\pi)] \le \epsilon  \E_{\tau \sim \tilde \pi} \sum_{t\ge 0} \gamma^t \big | \prod_{i=0}^t \rho_i^{1- \alpha_t^i } - 1 \big |, 
\end{align}
 where $\epsilon = \max_{s,a} |A^\pi (s,a)|$.
\end{lemma}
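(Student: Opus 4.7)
The plan is to start from the exact importance-sampling identity \refeqn{eqn:kakade_is}, which states $\eta(\tilde\pi)-\eta(\pi) = \E_{\tau \sim \pi}\sum_{t\ge 0}\gamma^t \rho_{0:t}(s_{0:t},a_{0:t})\,A^\pi(s_t,a_t)$. Subtracting the expectation of the estimator,
\begin{equation*}
\E_{\tau \sim \pi}\hat L_\pi^\alpha(\tilde\pi) \;=\; \E_{\tau\sim\pi}\sum_{t\ge 0}\gamma^t \rho_{0:t}^{\alpha_t}(s_{0:t},a_{0:t})\,A^\pi(s_t,a_t),
\end{equation*}
the bias collapses to a single expression in which the ``missing'' importance-sampling mass appears explicitly:
\begin{equation*}
\mathbb{B}[\hat L_\pi^\alpha(\tilde\pi)] \;=\; \Bigl|\E_{\tau\sim\pi}\sum_{t\ge 0}\gamma^t\bigl(\rho_{0:t}-\rho_{0:t}^{\alpha_t}\bigr) A^\pi(s_t,a_t)\Bigr|.
\end{equation*}

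Next I would pass to an upper bound by pushing the absolute value through the summation and the expectation via the triangle inequality, and then invoking the uniform bound $|A^\pi(s,a)|\le \epsilon$. This reduces the problem to controlling the scalar quantity $\E_{\tau\sim\pi}|\rho_{0:t}-\rho_{0:t}^{\alpha_t}|$ for each $t$. I would then factor
\begin{equation*}
\rho_{0:t}-\rho_{0:t}^{\alpha_t} \;=\; \rho_{0:t}^{\alpha_t}\Bigl(\prod_{i=0}^t \rho_i^{1-\alpha_t^i}-1\Bigr),
\end{equation*}
which isolates the residual product of weights $\rho_i^{1-\alpha_t^i}$ that encodes how far the smoothed estimator departs from the full IS one.

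Finally I would perform a change of measure from $\pi$ to $\tilde\pi$, using $\rho_{0:t}$ as the Radon--Nikodym derivative between the $(t{+}1)$-step trajectory marginals. Writing $\rho_{0:t}^{\alpha_t} = \rho_{0:t}\cdot\prod_{i=0}^t \rho_i^{\alpha_t^i-1}$ and absorbing $\rho_{0:t}$ into the expectation converts $\E_{\tau\sim\pi}[\,\cdot\,]$ into $\E_{\tau\sim\tilde\pi}[\,\cdot\,]$; the remaining exponents combine to produce the factor $|\prod_{i=0}^t\rho_i^{1-\alpha_t^i}-1|$ stated in the lemma.

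The main obstacle is precisely this last step: because $\rho_{0:t}^{\alpha_t}$ is not itself a valid Radon--Nikodym derivative (the exponentiated per-step policies do not normalise), one cannot change measure with it directly, and care is required in distributing $\rho_{0:t}$ to obtain the claimed form with the correct sign of the exponents. Once that algebraic identity is in place, assembling the bound is just the triangle inequality plus the uniform bound on $|A^\pi|$.
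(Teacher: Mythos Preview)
Your plan is correct and matches the paper's proof essentially step for step: subtract the IS identity from the estimator, factor, change measure from $\pi$ to $\tilde\pi$ via $\rho_{0:t}$, then apply the triangle inequality and $|A^\pi|\le\epsilon$. The only cosmetic difference is that the paper factors out $\rho_{0:t}$ directly (rather than $\rho_{0:t}^{\alpha_t}$) and performs the change of measure as an equality \emph{before} bounding, which slightly streamlines your last two steps into one.
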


\reflemma{lem:L_bias} implies that $\alpha_t = (1,1, \ldots, 1)$ for all $t \ge 0$ the introduced estimator $ \hat L_\pi^\alpha(\tilde\pi)$ remains unbiased, but decreasing the values of $\alpha_t$ will introduce bias which is controlled by the expected dispersion of values of $ |\prod_{i=0}^t \rho_i^{1- \alpha_t^i}-1|$ increasing with $\alpha_t^i$. 
We see that \reflemma{lem:L_var} together with \reflemma{lem:L_bias} allows us to trade off bias and variance of estimator $ \hat L_\pi^\alpha(\tilde\pi)$ when coordinates of vectors $\alpha_t$ are interpolated from $0$ to $1$. Thus \reflemma{lem:L_var} and \reflemma{lem:L_bias} validate the usefulness of estimator $\hat L_{\pi}^\alpha(\tilde \pi)$ in the context of off-policy evaluation. Note that we do need estimate of $\hat L_{\pi}^\alpha(\tilde \pi)$ directly to learn policy $\tilde\pi$. To perform the policy update we require an estimate of $\nabla \hat L_{\pi}^\alpha(\tilde \pi)$. Thus we now switch focus to provide the bounds on variability of $\nabla\hat L_{\pi}^\alpha(\tilde \pi)$ and norm of bias of
$\nabla \hat L_{\pi}^\alpha(\tilde \pi)$ in terms of the values of $||\alpha_t||$.

\begin{algorithm}[tb]
   \caption{Approximate IS Policy Optimization}
\begin{algorithmic}
   \STATE {\bfseries Input:} Initial policy $\pi_0$\\
   \REPEAT
   \STATE Sample trajectories $\{ (s_i, a_i, r_i, s_{t+1}) \}_{i=1}^N$ using $\pi$.
   \STATE Set $\tilde\pi = \pi_i$.
   \STATE      Estimate $\hat A^{\tilde\pi}(s,a)$ on $\{ (s_i, a_i, r_i, s_{i+1}) \}_{i=1}^N \sim \pi$.
   \FOR{$i=1$; $i \le \text{num policy updates}$}
   \STATE     Subsample minibatch B:  $\{ (s_i, a_i, r_i, s_{i+1}) \}_{i \in B} $.
    \STATE $\nabla L_\pi^{\alpha, \phi^\omega}(\tilde\pi) \approx \frac{1}{|B|} \sum_{t \in B} \gamma^t \nabla f^\alpha_{\phi^\omega}(s_{0:t}, a_{0:t}).$
   \STATE  Update $\tilde\pi$ using $\nabla L_\pi^{\alpha, \phi^\omega}(\tilde\pi)$.
   \ENDFOR
   \STATE     Set $\pi_{i+1} = \tilde\pi$. 
   \UNTIL{ $\pi_i$ \text{ has not converged}}
\end{algorithmic}
\end{algorithm}

\begin{lemma}
\label{lem:nablaL_var}
Given two policies $\pi$ and $\tilde\pi$ assume that $\forall_{s \in \mathcal{S}, a \in \mathcal{A}} \  C_\rho < \frac{\pi(a|s)}{\tilde\pi(a|s)} < C^\rho$ , $\forall_{s \in \mathcal{S}, a \in \mathcal{A}} \ ||\nabla \tilde \pi (a|s) ||_2 < C^\partial$ and $\E_{\tau \sim \pi }\sum_{t\ge 0} \gamma^t (\rhoa)^2 < C^\gamma$. Then we have that:
\begin{align}
    & \mathrm{tr} \Big [ 
    \C_{\tau \sim \pi} \big [\nabla \hat L_{\pi}^\alpha(\tilde \pi), \nabla \hat L_{\pi}^\alpha(\tilde \pi) \big ] \Big ] \le \zeta_{\pi, \tilde\pi} \E_{\tau \sim \pi}   \sum_{t \ge 0} \gamma^t \norm{\alpha_t}_1^2 , \label{eqn:pie}
\end{align}
for some constant $\zeta_{\pi, \tilde\pi} \in \mathbb{R}$.
\end{lemma}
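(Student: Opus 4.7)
The strategy is to reduce the trace-covariance bound to a second-moment bound via $\mathrm{tr}\,\C[X,X] = \E\|X\|_2^2 - \|\E X\|_2^2 \le \E\|X\|_2^2$ with $X = \nabla \hat L_{\pi}^\alpha(\tilde\pi)$, and then peel off the $\norm{\alpha_t}_1^2$ factor by applying Cauchy--Schwarz twice: once to the outer time index $t$, once (as a triangle inequality) to the inner $\alpha_t^i$-weighted sum of score functions. To start, I would differentiate $\rho_{0:t}^{\alpha_t} = \exp\bigl(\sum_{i=0}^t \alpha_t^i \log \rho_i\bigr)$ as in the gradient calculation already done in the main text to obtain
\begin{equation*}
\nabla \hat L_\pi^\alpha(\tilde\pi) = \sum_{t\ge 0} \gamma^t\, \rho_{0:t}^{\alpha_t}\, A^\pi(s_t,a_t) \sum_{i=0}^t \alpha_t^i\, \nabla\log\tilde\pi(a_i|s_i),
\end{equation*}
and set $Y_t$ equal to the $t$-summand without the $\gamma^t$, so that $\nabla \hat L = \sum_{t\ge 0} \gamma^t Y_t$.

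Next, the discounted Cauchy--Schwarz $\bigl\|\sum_t \gamma^t Y_t\bigr\|_2^2 \le (1-\gamma)^{-1} \sum_t \gamma^t \|Y_t\|_2^2$ (split $\gamma^t = \gamma^{t/2}\cdot\gamma^{t/2}$ and apply $\ell^2$ Cauchy--Schwarz) controls the outer sum, while the triangle inequality on the inner sum, using $\alpha_t^i \ge 0$, gives
\begin{equation*}
\Bigl\|\sum_{i=0}^t \alpha_t^i\, \nabla\log\tilde\pi(a_i|s_i)\Bigr\|_2 \le \norm{\alpha_t}_1 \max_{0\le i\le t} \|\nabla\log\tilde\pi(a_i|s_i)\|_2.
\end{equation*}
The score-function supremum is controlled by a constant $G$ depending on $C^\partial$, $C^\rho$, and the policies (from $\|\nabla\tilde\pi\|_2 < C^\partial$ combined with the lower bound $\tilde\pi \ge \pi/C^\rho$ implied by the assumption on $\pi/\tilde\pi$); together with $|A^\pi|\le\epsilon:=\max|A^\pi|$ this yields the per-$t$ estimate $\|Y_t\|_2^2 \le \epsilon^2 G^2 (\rho_{0:t}^{\alpha_t})^2 \norm{\alpha_t}_1^2$.

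Taking expectations and factoring out the deterministic $\norm{\alpha_t}_1^2$ leaves
\begin{equation*}
\E_{\tau\sim\pi}\|\nabla \hat L_\pi^\alpha(\tilde\pi)\|_2^2 \le \frac{\epsilon^2 G^2}{1-\gamma} \sum_{t\ge 0} \gamma^t \norm{\alpha_t}_1^2 \,\E_{\tau\sim\pi}\bigl[(\rho_{0:t}^{\alpha_t})^2\bigr],
\end{equation*}
and the claimed inequality follows by absorbing $\sup_{t\ge 0} \E[(\rho_{0:t}^{\alpha_t})^2]$ into $\zeta_{\pi,\tilde\pi}$. The hard part will be exactly this last absorption: the stated hypothesis $\E\sum_t \gamma^t (\rho_{0:t}^{\alpha_t})^2 < C^\gamma$ is only a summed bound, but to pull $\norm{\alpha_t}_1^2$ outside the expectation cleanly I need a uniform-in-$t$ bound on $\E[(\rho_{0:t}^{\alpha_t})^2]$. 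The natural way to close this gap is to invoke the per-factor two-sided bound $\rho_i \in (1/C^\rho,1/C_\rho)$ from the assumption on $\pi/\tilde\pi$ to argue $(\rho_{0:t}^{\alpha_t})^2 \le \max(1,1/C_\rho)^{2\norm{\alpha_t}_1}$, which is finite under an (implicit) uniform bound on $\norm{\alpha_t}_1$, and fold the resulting factor into the problem-dependent constant $\zeta_{\pi,\tilde\pi}$.
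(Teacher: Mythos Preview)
Your approach is essentially correct and closely tracks the paper's proof through the gradient formula, the reduction $\mathrm{tr}\,\C[X,X] \le \E\|X\|_2^2$, and the triangle inequality on the inner $\alpha_t^i$-weighted sum. The gap you flag in the final absorption step is real, but the paper's proof closes it differently from your proposed workaround.

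You apply the discounted Cauchy--Schwarz inequality at the level of vector norms, i.e.\ $\bigl\|\sum_t \gamma^t Y_t\bigr\|_2^2 \le (1-\gamma)^{-1}\sum_t \gamma^t \|Y_t\|_2^2$, which yields $\E\|\nabla\hat L\|_2^2 \le \mathrm{const}\cdot \sum_t \gamma^t \|\alpha_t\|_1^2\, \E[(\rhoa)^2]$ and then indeed requires a uniform-in-$t$ bound on $\E[(\rhoa)^2]$ that the hypothesis $\E\sum_t \gamma^t (\rhoa)^2 < C^\gamma$ does not supply. The paper avoids this by postponing Cauchy--Schwarz: it first uses the triangle inequality on the outer sum to bound the norm by the \emph{scalar} quantity $\epsilon\, C^\partial C_\delta \sum_{t\ge 0}\gamma^t \rhoa \|\alpha_t\|_1$, and only then applies Cauchy--Schwarz to this scalar sequence via
\[
\sum_{t\ge 0}\gamma^t \rhoa \|\alpha_t\|_1 \;\le\; \Bigl(\sum_{t\ge 0}\gamma^t (\rhoa)^2\Bigr)^{1/2}\Bigl(\sum_{t\ge 0}\gamma^t \|\alpha_t\|_1^2\Bigr)^{1/2}.
\]
Squaring and taking expectations decouples the two sums, so the hypothesis $\E_{\tau\sim\pi}\sum_t \gamma^t (\rhoa)^2 < C^\gamma$ applies directly and produces $\zeta_{\pi,\tilde\pi} = (\epsilon\, C^\partial C_\delta)^2 C^\gamma$ without any implicit uniform bound on $\|\alpha_t\|_1$. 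So the fix is not an extra assumption but a different placement of Cauchy--Schwarz: reduce to a scalar sum first, then split.
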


The assumption $\E_{\tau \sim \pi }\sum_{t\ge 0} \gamma^t (\rhoa)^2 < C^\gamma$ can be satisfied by uniformly bounding the number of non-zero components by $K$ for all $t \ge 0$ as in this case we again have $\norm{\alpha_t} \le K \ \forall t \ge 0$.
It follows that $\sum_{t\ge 0} \gamma^t (\rhoa)^2 \le \sum_{t\ge 0} \gamma^t C_\rho^{2\norm{\alpha}_1} = \frac{C_\rho^{2\norm{\alpha}_1}}{1-\gamma}$.
Note that \reflemma{lem:nablaL_var} allows to control the variability $\mathrm{tr} 
\C_{\tau \sim \pi} [\nabla \hat L_{\pi}^\alpha(\tilde \pi), \nabla \hat L_{\pi}^\alpha(\tilde \pi) ]$ by adjusting the coordinates of $\alpha_t$. It follows that we can guarantee lower variance when sparse vectors $\alpha_t$ are employed. The variability of gradients $\C_{\tau \sim \pi} [\nabla \hat L_{\pi}^\alpha(\tilde \pi), \nabla \hat L_{\pi}^\alpha(\tilde \pi) ]$ is zero when we consider $\alpha_t = 0 \ \forall t > 0$, however in this setting the objective $\hat L_{\pi}^\alpha(\tilde \pi)$ does not depend on policy $\tilde \pi$ and hence $\nabla \hat L_{\pi}^\alpha(\tilde \pi) = 0$.
 Nevertheless when values of $\alpha_t$ are low we can expect  $\nabla \hat L_{\pi}^\alpha(\tilde \pi)$ to exhibit low variance. From an intuitive point of view, smoothing introduced by $\alpha_t^i$ flattens the optimization objective $L_{\pi}^\alpha(\tilde \pi)$ and hence reduces the variability of gradients $\nabla \hat L_{\pi}^\alpha(\tilde \pi)$.
 Next we now provide the result quantifying the norm of bias of $\nabla \hat L_{\pi}^\alpha(\tilde \pi)$ introduced by applying smoothing by $\alpha_t$.

\begin{lemma}
\label{lem:nablaL_bias}
Under assumptions from \reflemma{lem:nablaL_var} the norm of bias $\norm{\E_{\tau \sim \pi} \nabla \hat L_{\pi}^\alpha(\tilde \pi)- \nabla \eta(\tilde\pi)}_2$ is upper bounded as follows:
\begin{align}
    & \norm{\E_{\tau \sim \pi} \nabla \hat L_{\pi}^\alpha(\tilde \pi)- \nabla \eta(\tilde\pi)}_2 \nonumber \\
    & \le \epsilon C^\partial \E_{\tau \sim \pi } \sum_{t \ge 0}  \gamma^t \Big [ \rhoa \norm{ \alpha_t -\mathbf{1}}_1 +  (t+1) |\rhoa  - \rho_{0:t} \big | \Big ] .
\end{align}
\end{lemma}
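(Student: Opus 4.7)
\medskip

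\noindent\textbf{Proof proposal for \reflemma{lem:nablaL_bias}.}
The plan is to exploit the exact identity \refeqn{eqn:kakade_is}, which gives
$$\nabla \eta(\tilde \pi) = \E_{\tau\sim\pi}\sum_{t\ge 0}\gamma^{t}\,\bigl(\nabla \rho_{0:t}\bigr)\,A^{\pi}(s_t,a_t),$$
since the advantage $A^{\pi}$ and the sampling distribution $\tau\sim\pi$ do not depend on $\tilde\pi$. By definition $\E_{\tau\sim\pi}\nabla\hat L^{\alpha}_\pi(\tilde\pi) = \E_{\tau\sim\pi}\sum_{t\ge 0}\gamma^{t}\,\bigl(\nabla \rhoa\bigr)A^\pi(s_t,a_t)$, so the bias decomposes termwise as the expected sum of the gradient discrepancies $\nabla\rhoa-\nabla\rho_{0:t}$, each multiplied by an advantage of magnitude at most $\epsilon$.

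The first step is to evaluate both gradients with the log-derivative trick, reusing \refeqn{eqn:grad}:
$$\nabla\rhoa = \rhoa\sum_{i=0}^{t}\alpha_t^{i}\,\nabla\log\tilde\pi(a_i|s_i),\qquad \nabla\rho_{0:t} = \rho_{0:t}\sum_{i=0}^{t}\nabla\log\tilde\pi(a_i|s_i).$$
I would then add and subtract $\rhoa\sum_{i=0}^{t}\nabla\log\tilde\pi(a_i|s_i)$ to obtain the key decomposition
$$\nabla\rhoa-\nabla\rho_{0:t} \;=\; \rhoa\sum_{i=0}^{t}(\alpha_t^{i}-1)\,\nabla\log\tilde\pi(a_i|s_i)\;+\;\bigl(\rhoa-\rho_{0:t}\bigr)\sum_{i=0}^{t}\nabla\log\tilde\pi(a_i|s_i),$$
which isolates the two sources of bias, the power-smoothing term and the weight-difference term corresponding exactly to the two summands in the claimed upper bound.

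Next I would apply the triangle inequality under the expectation, use $|A^{\pi}(s_t,a_t)|\le\epsilon$ (as in \reflemma{lem:L_var}), and bound each $\|\nabla\tilde\pi(a_i|s_i)\|_2\le C^{\partial}$. The first summand contributes, after factoring $C^{\partial}$ from the $\nabla\tilde\pi$ norms, the quantity $\epsilon\, C^{\partial}\rhoa \sum_{i=0}^{t}|\alpha_t^{i}-1| = \epsilon\, C^{\partial}\,\rhoa\,\norm{\alpha_t-\mathbf 1}_1$; the second summand is bounded by $\epsilon\, C^{\partial}(t+1)\,|\rhoa-\rho_{0:t}|$ because the inner sum has $t+1$ terms. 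Summing over $t$ with the discount $\gamma^{t}$ and taking the expectation under $\tau\sim\pi$ produces the stated inequality.

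The main obstacle will be handling the $1/\tilde\pi(a_i|s_i)$ factor that appears implicitly inside $\nabla\log\tilde\pi(a_i|s_i)$, since only $\|\nabla\tilde\pi\|_2$ and ratios $\pi/\tilde\pi$ are directly controlled in the hypotheses of \reflemma{lem:nablaL_var}. The resolution is to never split $\rhoa\,\nabla\log\tilde\pi(a_i|s_i)$ or $\rho_{0:t}\,\nabla\log\tilde\pi(a_i|s_i)$ into numerator and denominator separately, but rather to rewrite the offending quotient as $\tilde\pi(a_i|s_i)^{\alpha_t^i-1}\pi(a_i|s_i)^{-\alpha_t^i}\prod_{j\neq i}\rho_j^{\alpha_t^j}\,\nabla\tilde\pi(a_i|s_i)$ and absorb the spurious $\tilde\pi^{-1}$ either using the uniform ratio bound $\pi/\tilde\pi<C^{\rho}$ (which is exactly why that assumption was imported from \reflemma{lem:nablaL_var}) or, more cleanly, by a change of measure to $\tau\sim\tilde\pi$ so that the $1/\tilde\pi$ is cancelled by the trajectory density before the bound on $\|\nabla\tilde\pi\|_2$ is applied. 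Once this absorption is carried out, the two terms match the claimed bound exactly.
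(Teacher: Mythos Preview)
Your argument is essentially the paper's: both compute $\nabla\rhoa$ and $\nabla\rho_{0:t}$ via the log-derivative identity \refeqn{eqn:grad}, take the difference termwise, and split each scalar coefficient as $\alpha_t^i\rhoa-\rho_{0:t}=(\alpha_t^i-1)\rhoa+(\rhoa-\rho_{0:t})$ to produce exactly the two summands $\rhoa\norm{\alpha_t-\mathbf 1}_1$ and $(t+1)|\rhoa-\rho_{0:t}|$. Regarding the $1/\tilde\pi$ obstacle you flag: the paper's proof carries the factor $\frac{\pi(a_i|s_i)}{\tilde\pi(a_i|s_i)}$ (as it appears in \refeqn{eqn:grad}) into the norm bound and then passes directly from $\sum_i|\alpha_t^i\rhoa-\rho_{0:t}|\,\big|\tfrac{\pi}{\tilde\pi}\big|\,|A^\pi|\,\norm{\nabla\tilde\pi}_2$ to $\epsilon C^\partial\sum_i|\alpha_t^i\rhoa-\rho_{0:t}|$ without invoking the ratio assumption $C^\rho$, i.e.\ the factor is simply dropped rather than absorbed---so your proposal is, if anything, more scrupulous on this point than the paper's printed derivation, and your first suggested fix (paying an extra $C^\rho$) is the honest version of what the paper does.
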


\reflemma{lem:nablaL_bias} quantifies the bias of $\nabla \hat L_{\pi}^\alpha(\tilde \pi)$ as the expected sum of two terms depending of the distances of smoothed IS ratios from true ones $|\rhoa  - \rho_{0:t} \big |$ and the $L_1$ distance of $\alpha_t$ from the vector of ones $\mathbf{1}$ scaled by $\rhoa$. Similarly as in the case of $\hat L_{\pi}^\alpha(\tilde \pi)$ decreasing coordinates $\alpha_t$ from $1$ to $0$ causes these quantities to be non-zero and introduces bias. \reflemma{lem:nablaL_var} and \reflemma{lem:nablaL_bias} provide a basis to obtain trade-off between the norm of bias and variability of $\nabla \hat L_{\pi}^\alpha(\tilde \pi)$ for coordinates of $\alpha_t$ ranging from $0$ to $1$ which we exploit to improve the sample efficiency of policy optimization. 

\subsection{The surrogate objective $L_{\pi}(\tilde \pi)$.}
Next we present the results allowing us to unify the previous work on policy improvement of \citet{Kakade:2002:AOA:645531.656005, pmlr-v37-schulman15, pmlr-v70-achiam17a, pmlr-v28-pirotta13} which analyzed surrogate objective $L_\pi(\tilde\pi)$. In the supplement we demonstrate how the previously derived results can be obtained from the following \reflemma{lem:mpie}. 

\begin{lemma}
\label{lem:mpie}
Given two policies $\pi$ and $\tilde\pi$ we can express the difference of values $\eta(\tilde\pi) - \eta(\pi) $ as
\begin{align}
\label{eqn:mpie}
&\eta(\tilde\pi) - \eta(\pi) =  L_{\pi}(\tilde \pi) + \nonumber\\
& \frac{1}{1 - \gamma} \E_{s\sim d^\pi} \int \Delta^Q_{\pi,\tilde\pi}(s,a) \big (\tilde\pi(a|s) - \pi(a|s) \big ) da,
\end{align}
where $\Delta^Q_{\pi,\tilde\pi}(s,a) = Q^{\tilde\pi}(s,a) - Q^\pi(s,a)$.
\end{lemma}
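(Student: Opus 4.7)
The plan is to exploit the symmetry of Kakade's identity (Equation \ref{eqn:lem_kakade}) under swapping the roles of $\pi$ and $\tilde\pi$, and then to rewrite both the target quantity $\eta(\tilde\pi)-\eta(\pi)$ and the surrogate $L_\pi(\tilde\pi)$ as expectations over the same measure $d^\pi$, so that their difference collapses to a clean integral.

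First I would apply Kakade's identity with the two policies swapped to obtain
\begin{equation*}
\eta(\pi) - \eta(\tilde\pi) = \frac{1}{1-\gamma}\E_{s\sim d^{\pi}, a\sim \pi(\cdot|s)} A^{\tilde\pi}(s,a).
\end{equation*}
Since $\int \tilde\pi(a|s) A^{\tilde\pi}(s,a)\, da = 0$ by the definition of the advantage, adding this vanishing term and flipping the sign yields
\begin{equation*}
\eta(\tilde\pi) - \eta(\pi) = \frac{1}{1-\gamma}\E_{s\sim d^{\pi}} \int (\tilde\pi(a|s) - \pi(a|s)) A^{\tilde\pi}(s,a)\, da.
\end{equation*}
Symmetrically, using $\int \pi(a|s) A^{\pi}(s,a)\, da = 0$ inside the definition of $L_\pi(\tilde\pi)$ gives
\begin{equation*}
L_\pi(\tilde\pi) = \frac{1}{1-\gamma}\E_{s\sim d^{\pi}} \int (\tilde\pi(a|s) - \pi(a|s)) A^{\pi}(s,a)\, da.
\end{equation*}

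Subtracting the two expressions, the difference $\eta(\tilde\pi)-\eta(\pi)-L_\pi(\tilde\pi)$ becomes a single $d^\pi$-expectation of $(\tilde\pi(a|s)-\pi(a|s))[A^{\tilde\pi}(s,a) - A^\pi(s,a)]$. The key algebraic step is then
\begin{equation*}
A^{\tilde\pi}(s,a) - A^\pi(s,a) = \Delta^Q_{\pi,\tilde\pi}(s,a) - \bigl(V^{\tilde\pi}(s) - V^\pi(s)\bigr),
\end{equation*}
and the $V$-term is a function of $s$ alone, so it is annihilated by the integral against $\tilde\pi(a|s)-\pi(a|s)$ (whose integral over $a$ is zero). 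What remains is exactly $\frac{1}{1-\gamma}\E_{s\sim d^\pi}\int \Delta^Q_{\pi,\tilde\pi}(s,a)(\tilde\pi(a|s)-\pi(a|s))\, da$, establishing \refeqn{eqn:mpie}.

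There is no serious obstacle here: the only nontrivial move is recognizing that applying Kakade's identity to the reverse pair $(\tilde\pi,\pi)$ gives an expression over $d^\pi$ rather than $d^{\tilde\pi}$, which is what allows both sides of the desired equation to live on the same state distribution. Once that substitution is made, the proof reduces to inserting zero-mean action terms under each policy and exploiting the fact that $V^{\tilde\pi}(s)-V^\pi(s)$ is action-independent.
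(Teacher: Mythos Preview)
Your proof is correct and follows essentially the same approach as the paper: both start from Kakade's identity applied to the swapped pair $(\tilde\pi,\pi)$ so that the expectation lives over $d^\pi$, then perform algebra to isolate $L_\pi(\tilde\pi)$ and the $\Delta^Q$ term. Your organization---writing both $\eta(\tilde\pi)-\eta(\pi)$ and $L_\pi(\tilde\pi)$ in the common form $\frac{1}{1-\gamma}\E_{s\sim d^\pi}\int(\tilde\pi-\pi)A^{\bullet}\,da$ via the zero-mean advantage trick, then subtracting and dropping the action-independent $V^{\tilde\pi}-V^\pi$ term---is a bit more streamlined than the paper's direct expansion, but the underlying idea is identical.
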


The term $\frac{1}{1 - \gamma} \E_{s\sim d^\pi} \int \big (\tilde\pi(a|s) - \pi(a|s) \big ) \big (Q^{\tilde\pi}(s,a) - Q^\pi(s,a)\big ) da$ in \refeqn{eqn:mpie} quantifies the difference between the surrogate function $L_{\pi}(\tilde\pi)$ typically used in policy optimization algorithms and the real difference between the values of target policy $\eta(\tilde\pi)$ and data gathering policy $\eta(\pi)$. 
The equality derived in \reflemma{lem:mpie} should be compared with equality derived in \reflemma{eqn:lem_kakade}.
We have that $\eta(\tilde\pi) - \eta(\pi) = \frac{1}{1-\gamma} \E_{s \sim d^{\tilde\pi}} A^\pi (s,a) =  \frac{1}{1-\gamma} \E_{s \sim d^{\pi}} A^\pi (s,a) + \frac{1}{1 - \gamma} \E_{s\sim d^\pi} \int \big (\tilde\pi(a|s) - \pi(a|s) \big ) \big (Q^{\tilde\pi}(s,a) - Q^\pi(s,a)\big )$. Thus the change of the occupancy measure from $d^{\tilde\pi}$ to $d^\pi$ in \refeqn{eqn:lem_kakade} leads to an additional difference term $\frac{1}{1 - \gamma} \E_{s\sim d^\pi} \int \big (\tilde\pi(a|s) - \pi(a|s) \big ) \big (Q^{\tilde\pi}(s,a) - Q^\pi(s,a)\big ) da$.

 When reward $R$ does not depend on actions it is straightforward to obtain the upper bound on $|\eta(\tilde\pi) - \eta(\pi)| \le R_{max} \norm{ d^{\tilde\pi}  - d^\pi }_1 \le \frac{2 R_{max}}{1-\gamma} \E_{s \sim d^\pi} \tv(\pi(\cdot|s),\tilde\pi(\cdot|s))$ using Holder's inequality and Lemma 3 in \citet{pmlr-v70-achiam17a}. 
 The results known in the literature can be viewed as providing tighter bounds on quantity $\langle R, d^\pi  - d^{\tilde\pi }\rangle $. 
 Since using Holder's inequality applied with $p=1$ and $q=\infty$ can be viewed as uniform bounding it is justified to expect that the difference term $\eta(\tilde\pi)-\eta(\pi) -  L_{\pi}(\tilde \pi)$ can be represented in the form $\frac{1}{1-\gamma} \E_{s \sim d^\pi} \int \Delta^Q_{\pi,\tilde\pi}(s,a) \big ( \tilde \pi(a|s) - \pi(a|s) \big ) da $ for some weighting function $\Delta^Q_{\pi,\tilde\pi}$. Different ways of upper bounding $\Delta^Q_{\pi,\tilde\pi}(s,a)$ will lead to different bounds. The difficulty is to establish the expression for weights $\Delta^Q_{\pi,\tilde\pi}(s,a)$ which is done by \reflemma{lem:mpie}. Calculating the RHS of \refeqn{eqn:mpie} requires the access to Q-function of target policy $Q^{\tilde\pi}(s,a)$. Unfortunately, $Q^{\tilde\pi}(s,a)$ is difficult to be directly estimated in a practical setting. Previous works \citep{pmlr-v37-schulman15, pmlr-v70-achiam17a} used an uniform upper bounding of $Q^{\tilde\pi}(s,a) - Q^\pi(s,a)$ over the whole horizon resulting in terms dependent on $\frac{1}{1-\gamma}$. 

\section{Experiments}
In this section we aim to empirically validate the performance of the introduced policy optimization objective $L_\pi^\alpha(\tilde\pi)$. We firstly examine bias and variance of $L_\pi^\alpha(\tilde\pi)$ on a toy problem. We follow by demonstrating that Approximate IS Policy Optimization outperforms PPO \cite{DBLP:journals/corr/SchulmanWDRK17} on standard Mujoco benchmarks. Then we investigate the effect of varying $\alpha$ in the objective $L^{\alpha, \phi^\omega}_\pi(\tilde\pi)$ on the sample efficiency of learning in standard continuous control benchmarks. We also test the robustness to suboptimal choice of hyperparameters. We compare sample efficiency of policies $\pi_1, \pi_2$ as the ratio of values of $\frac{\eta(\pi_1)}{\eta(\pi_2)}$ at the end of learning.

\begin{table*}[ht!]
\begin{center}
\tiny
\begin{tabular}{ccccccccc}
\toprule
$\alpha_t$ &  Standup &  Reacher &  Ant &  Hopper &   Humanoid & Swimmer &  Walker2d &  HalfCheetah \\

\midrule
$(0, \ldots, 0.0, 0.5, 0.5, 0.5, 1.0)$ &      $1.19\pm0.05$ &  $0.98\pm0.09$ &  $1.24\pm0.06$ &  $0.98\pm0.06$ &  $1.48\pm0.06$ &  $1.15\pm0.06$ &  $1.02\pm0.05$ &  $1.03\pm0.24$ \\
$(0, \ldots, 0.0, 0.5, 0.5, 1.0)$     &      $1.14\pm0.04$ &  $0.96\pm0.07$ &  $1.25\pm0.07$ &   $0.97\pm0.1$ &  $1.46\pm0.07$ &   $1.11\pm0.1$ &  $0.98\pm0.06$ &  $1.23\pm0.29$ \\
$(0, \ldots, 0.0, 0.5, 1.0)$     &      $1.06\pm0.02$ &   $0.9\pm0.06$ &  $1.15\pm0.12$ &   $1.01\pm0.1$ &  $1.37\pm0.09$ &  $1.14\pm0.07$ &  $1.01\pm0.06$ &  $1.16\pm0.27$\\
\midrule

\end{tabular}

\end{center}
\caption{Relative improvement over PPO \citep{DBLP:journals/corr/SchulmanWDRK17} for well-performing schemes of $\alpha_t$. Approximate IS Policy sampling provides an improvements of $15\%-50\%$ in sample efficiency on five out of eight tested environments.}
\label{tab:best_alpha_results}
\end{table*}

\textbf{NChain}
To investigate the bias variance trade-off obtained by introducing the objective $L_\pi^\alpha(\tilde\pi)$ we carry out analysis of off-policy evaluation on NChain \citep{Strens:2000:BFR:645529.658114} environment. We loop over different values of $\tilde\pi(right)$ and keep $\pi(right) = 0.5$. We estimate the value of $\eta(\tilde\pi)$ using $\hat L_\pi^\alpha(\tilde\pi)$ for $\alpha_t = (0,0,\ldots, 0, \beta, 1)$ for $\beta$ in range $[0,1]$ and data gathered with policy $\pi$. We calculate the estimators based on $5 \cdot 10^5$ trajectories. We set environment slip parameter to the default value of $0.2$, the number of states to $5$ and the discount factor $\gamma$ to $0.8$. We report bias, standard deviation and RMSE (Root Mean Squared Error) of tested estimators in \reffig{fig:nchain_results}. As expected, we observe that increasing $\beta$ reduces bias but increases variance. This results in the U-shaped RMSE curve caused by bias-variance decomposition. When the distance between $\tilde\pi$ and $\pi$ increases the introduced estimator $\hat L_\pi^\alpha(\tilde\pi)$ outperforms $\hat L_\pi(\tilde\pi)$ achieving lower RMSE. Note this experiment evaluates the quality of $\hat L_\pi^\alpha(\tilde\pi)$ being an off-policy estimator of $\eta(\tilde\pi) - \eta(\pi)$.

\begin{figure}[ht!]   
\hspace*{0.4cm}  
\includegraphics[width=75mm,height=75mm]{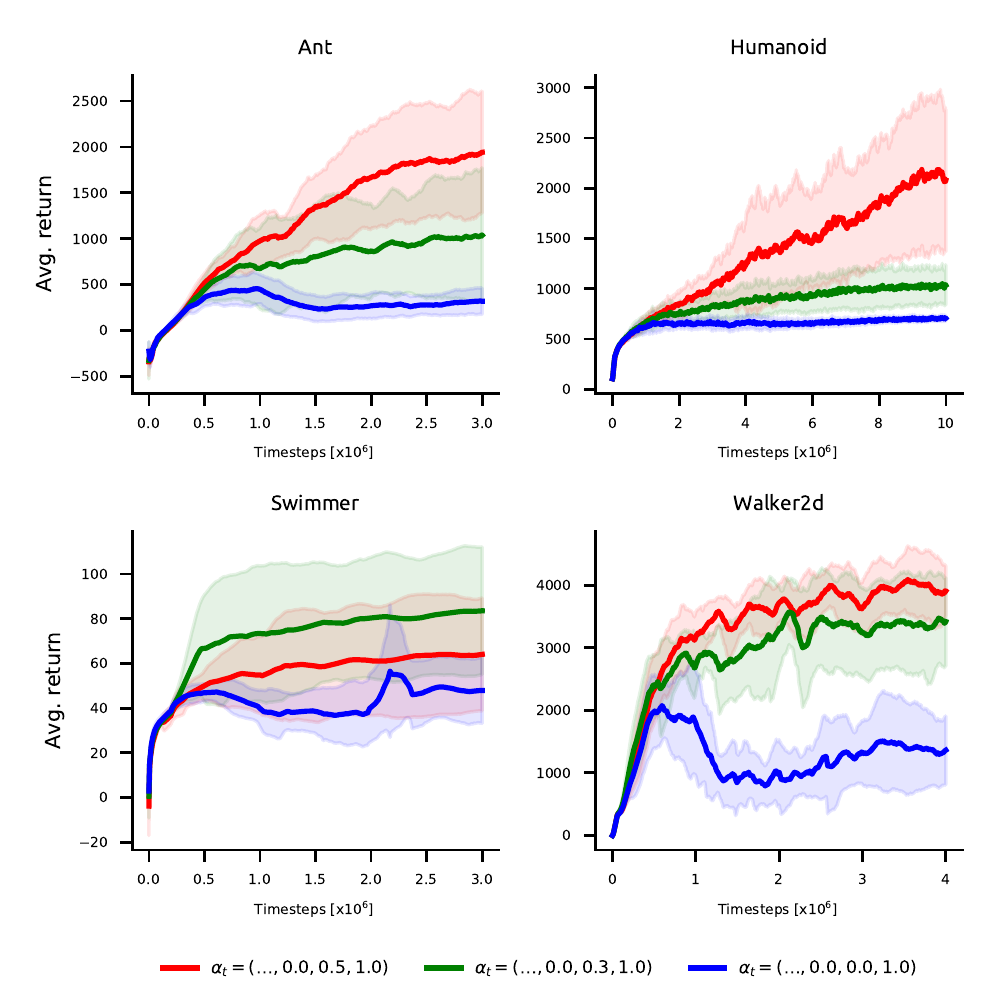}
\caption{Learning curves with standard deviation error bars for robustness test. We experiment with $\alpha_t = (0,0,\ldots, 0, \beta, 1)$ with $\beta$ in the range of $[0.0, 0.3, 0.5]$. Setting $\beta$ to values $0.3$ or $0.5$ leads to significantly better robustness to suboptimal hyperparameters than PPO (blue curve).}
\label{fig:robust_results}
\end{figure}

\begin{figure}[ht!]                                              
\includegraphics[width=80mm,height=40mm]{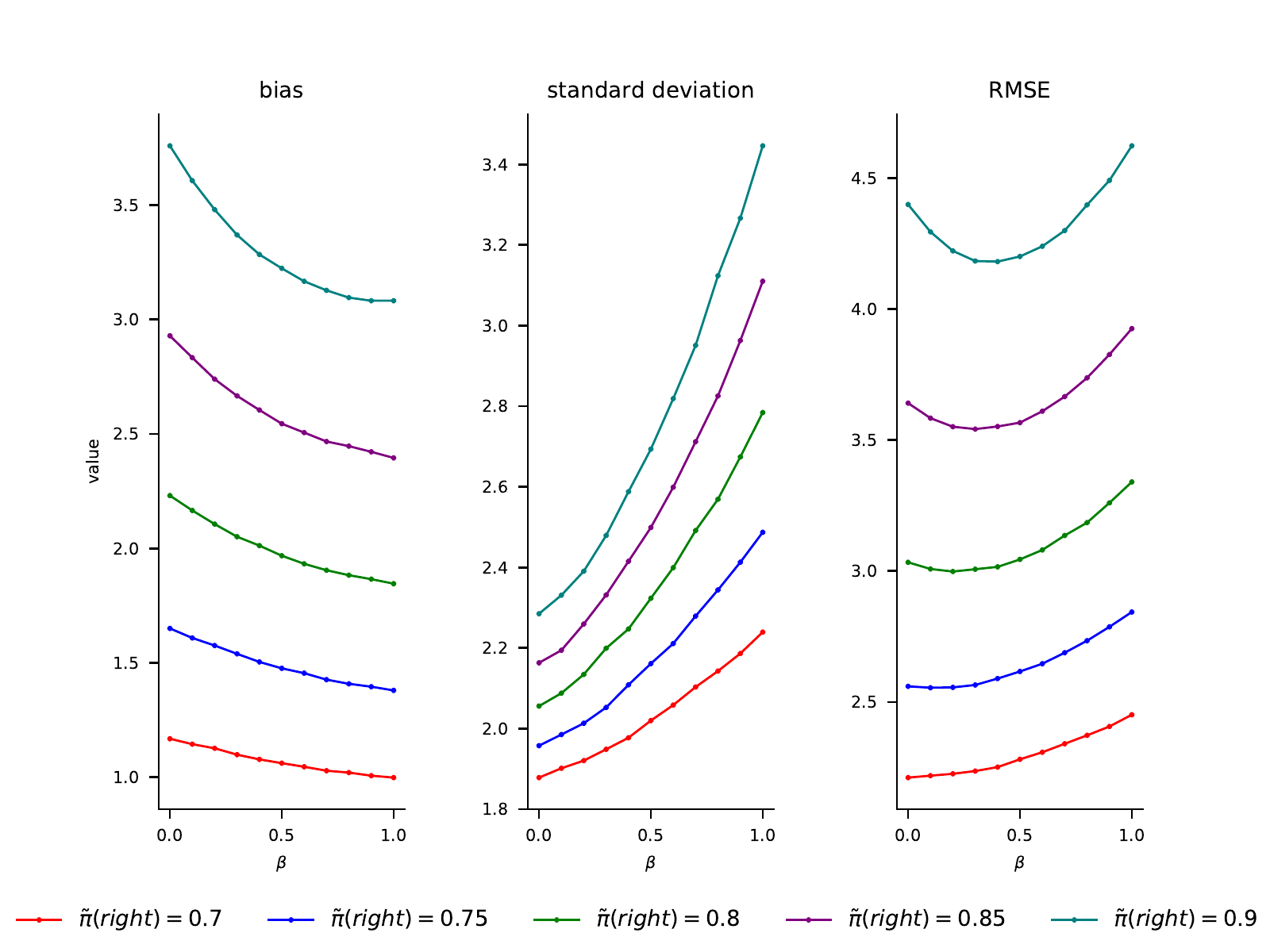}
\caption{The bias variance trade-off on NChain environment. We investigate $\alpha_t = (0,0,\ldots, 0, \beta, 1)$ for different $\tilde\pi(right) \in [0.7, 0.75, 0.8, 0.85, 0.9]$. Increasing $\beta$ from $0$ to $1$ reduces the bias but increases the variance of the estimator $L_\pi^\alpha(\tilde\pi)$.}
\label{fig:nchain_results}
\end{figure}

\textbf{Mujoco Environments}
 We follow by carrying out experimentation on high dimensional control problems using Mujoco \citep{mujoco} as a simulator. We adopt the experimental set up from \cite{pmlr-v37-schulman15, DBLP:journals/corr/SchulmanWDRK17} and experiment with eight representative Mujoco environments.

\begin{figure*}[ht!]
\includegraphics[width=170mm,height=61mm]{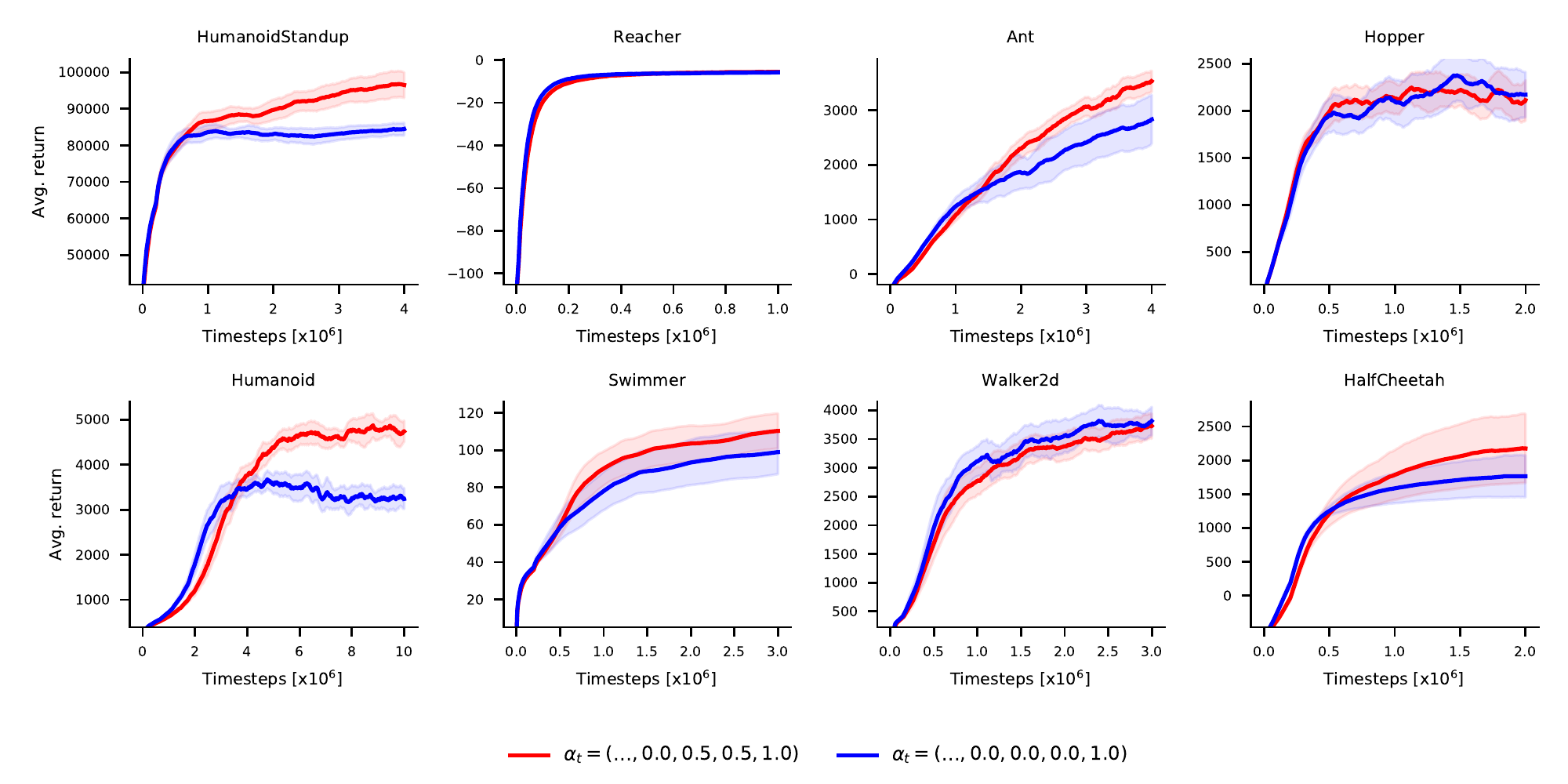}
\caption{Learning curves for continuous control benchmarks reported with half standard deviation error bars. We compare well-performing choice $\alpha_t= (0,0, \ldots, 0.0, 0.5, 0.5, 1)$ against PPO (blue curve). The results are averaged over $20$ seeds. Approximate IS Policy Optimization leads to an improved speed of learning on several environments. See text for full details.}
\label{fig:main_results}
\end{figure*}

In these experiments we use $\alpha_t = (0,0,\ldots, 0, 0.5, 0.5, 1)$. The benefit of these choice of $\alpha_t$ of policy optimization is that it incurs only modest additional computational burden and is straightforward to implement in a practical setting. Note that the choice $\alpha_t = (0,0,\ldots, 0, 0, 1)$ corresponds to PPO algorithm. We discuss the results of experimenting with a wide range of different settings of $\alpha_t$ later in the text.

 We average the learning curves over $20$ random seeds. The results are gathered in \reffig{fig:main_results}. We see that Approximate IS Policy Optimization increases speed of learning on five environments: HumanoidStandup, Ant, Swimmer, HalfCheetah and Humanoid and results in the same performance on three environments Walker2d, Hopper and Reacher. 
 We observe a substantial improvement corresponding to $48\%$ increase in sample efficiency on Humanoid environment and average improvement of $20 \%$ on four further environments (HumanoidStandup, Ant, Swimmer and HalfCheetah). We report detailed results and all experimental details allowing to replicate the experiments in the Appendix. Note Approximate IS Policy Optimization either outperforms PPO or results in a statistically insignificant difference in returns.

\textbf{Robustness test}
Next we test the robustness of Approximate IS Policy Optimization to suboptimal hyperparameters. In practice tuning hyperparameters per environment leads to significant computational burden and often suboptimal hyperparameters have to be used. For this task, we run the algorithm in four Mujoco environments and average the learning curves over $8$ separate seeds. During these experiments we use too high clip value, i.e. $0.4$ for Ant, Swimmer and Walker2d tasks and $0.2$ for Humanoid environment. In this task we again experiment with the choice $\alpha_t = (0,0,\ldots, 0, \beta, 1)$. We report the learning curves in \reffig{fig:robust_results}. We see that adjusting $\beta$ can dramatically improve sample efficiency of PPO and we observe that the improvement in sample efficiency increases with the value of used $\beta$. While standard PPO is unable to learn on Humanoid task Approximate IS Policy Optimization can provide meaningful policy updates leading to much better performance.

\textbf{Different schemes of $\alpha_t$}
We also experiment with choices of $\alpha_t = (0,0,\ldots, 0, \beta_3, \beta_2, \beta_1)$ and report the results in Table \ref{tab:best_alpha_results}. We see that introducing further non-zero components to $\alpha_t$ provides improvements when first and second component are added but plateaus for third added component. 
Investigated choices of $\alpha_t$ again provide improvements on five environments out of eight tasks. 
We also observe that using more complicated schemes of $\alpha_t$ than two non-zero components can provide better performance as visible on HumanoidStandup task but generally lead to similar results while requiring additional computation and slightly slowing down the algorithm. 
We provide extensive comparison of different choices of $\alpha_t$ in the Appendix. 

\section{Conclusions}
We introduced a novel approach to policy optimization based on the approximation to importance sampling. We demonstrated that the introduced proxy objective allows to trade-off bias and variance and covers surrogate objective from \citet{pmlr-v37-schulman15} and importance sampling as special cases. We also derived theoretical results analyzing bias and variance of the introduced objective followed by providing results unifying the previous work on policy improvement. The derived policy optimization procedure led to increase in sample efficiency up to $48 \%$ on high dimensional environments such as Humanoid.


\bibliography{main}
\bibliographystyle{icml2020}

\newpage
\onecolumn
\appendix
\section{Appendix}

\subsection{Skipped proofs}

We begin by providing \reflemma{lem:truncated_bound} which we will use to derive proofs of claims present in text.
\begin{lemma}
\label{lem:truncated_bound}
For random variables $\{X_t\}_{t=0}^\infty$ such that $\exists C > 0 \ \forall t \ge 0  \ |X_t| \le C$ have that
\begin{align}
    \big (\sum_{t \ge 0}^T \gamma^t
    X_t \big )^2
    \le \frac{1-\gamma^{T+1}}{1-\gamma} \sum_{t \ge 0}^T \gamma^t X_t^2. 
\end{align}
\end{lemma}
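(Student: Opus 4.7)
The plan is to recognize this as a weighted Cauchy--Schwarz (equivalently, Jensen's inequality applied to the convex function $x \mapsto x^2$ with a normalized geometric weighting). The boundedness assumption $|X_t| \le C$ is not actually used in the inequality itself; it is just present to guarantee all the sums are well-defined (and will be convenient in downstream applications where $T \to \infty$). Since the sum is finite ($T+1$ terms), no convergence issues arise.

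The main step is to split each factor $\gamma^t$ as $\gamma^{t/2} \cdot \gamma^{t/2}$ and regroup as follows:
\begin{equation*}
\sum_{t=0}^T \gamma^t X_t \;=\; \sum_{t=0}^T \bigl(\gamma^{t/2}\bigr) \cdot \bigl(\gamma^{t/2} X_t\bigr).
\end{equation*}
Applying the Cauchy--Schwarz inequality to the sequences $(\gamma^{t/2})_{t=0}^T$ and $(\gamma^{t/2} X_t)_{t=0}^T$ in $\mathbb{R}^{T+1}$ yields
\begin{equation*}
\Bigl(\sum_{t=0}^T \gamma^t X_t\Bigr)^2 \;\le\; \Bigl(\sum_{t=0}^T \gamma^t\Bigr) \Bigl(\sum_{t=0}^T \gamma^t X_t^2\Bigr).
\end{equation*}
The remaining step is just to evaluate the finite geometric series $\sum_{t=0}^T \gamma^t = \frac{1-\gamma^{T+1}}{1-\gamma}$ and substitute, which produces exactly the stated bound.

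There is no real obstacle here: the hardest part is spotting the correct way to split $\gamma^t$ so that Cauchy--Schwarz produces a geometric sum on one side and the desired second-moment expression on the other. One could alternatively present the argument via Jensen's inequality applied to the probability measure with weights $p_t = \gamma^t / \sum_{s=0}^T \gamma^s$, but the Cauchy--Schwarz route is more direct and avoids having to rearrange normalizing constants at the end. Either way, the boundedness hypothesis $|X_t|\le C$ need only be invoked implicitly to ensure the sums on both sides are finite so the inequality is meaningful.
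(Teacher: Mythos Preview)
Your proposal is correct and is essentially identical to the paper's own proof: both split $\gamma^t = \gamma^{t/2}\cdot\gamma^{t/2}$, apply Cauchy--Schwarz, and then sum the resulting geometric series. Your additional remark that the boundedness hypothesis is not actually needed for the finite-horizon inequality is accurate and a nice observation.
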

\begin{proof}
By Cauchy-Schwarz inequality we have that:
\begin{align}
    \Big (\sum_{t \ge 0}^T \gamma^t
    X_t \Big )^2 = 
    \Big (\sum_{t \ge 0}^T \gamma^{t/2}
    \gamma^{t/2} X_t \Big )^2 \le
    \sum_{t \ge 0}^T \gamma^{t} \sum_{t \ge 0}^T \gamma^{t} X_t^2.
\end{align}
By summing geometric series we obtain 
\begin{align}
    \Big (\sum_{t \ge 0}^T \gamma^t
    X_t \Big )^2 \le
    \frac{1-\gamma^{T+1}}{1-\gamma} \sum_{t \ge 0}^T \gamma^{t} X_t^2.
\end{align}
\end{proof}

\begin{proof}[Derivation of \reflemma{lem:L_var}]
By applying \reflemma{lem:truncated_bound} we have that
\begin{align}
    \Big (\sum_{t \ge 0}^T \gamma^t
    \rhoa A^\pi (s_t, a_t) \Big )^2
    \le \frac{1-\gamma^{T+1}}{1-\gamma} \sum_{t \ge 0}^T \gamma^t (\rhoa)^2 A^\pi (s_t, a_t)^2 . 
\end{align}
We also have that
\begin{align}
    & \sum_{t \ge 0}^T \gamma^t (\rhoa)^2 A^\pi (s_t, a_t)^2  =   \sum_{t \ge 0}^T \gamma^t \prod_{i=0}^T (\rho_i^{\alpha_t^i})^2 A^\pi (s_t, a_t)^2   \\
    &\le  \epsilon^2 \sum_{t \ge 0}^T \gamma^t \prod_{i=0}^T (\rho_i^{\alpha_t^i})^2
  \le \epsilon^2 \sum_{t \ge 0}^T \gamma^t \prod_{i=0}^T C_{\rho}^{2 \alpha_t^i}  \\
  &    \le \epsilon^2 \sum_{t \ge 0}^T \gamma^t  C_{\rho}^{2 \sum_{i=0}^T \alpha_t^i} = 
  \epsilon^2 \sum_{t \ge 0}^T \gamma^t  C_{\rho}^{2 \norm {\alpha_t}_1},
\end{align}
where we have used $\epsilon = \max_{s \in \mathcal{S}, a \in \mathcal{A}} A^\pi(s,a)$ and $\max_{s \in \mathcal{S}, a \in \mathcal{A}} \rho(s,a) < C_\rho$. We now have 
\begin{align}
    &\E_{\tau \sim \pi} \Big (\sum_{t \ge 0}^T \gamma^t
    \rhoa A^\pi (s_t, a_t) \Big )^2
    \le \frac{1-\gamma^{T+1}}{1-\gamma} \E_{\tau \sim \pi} \sum_{t \ge 0}^T \gamma^t (\rhoa)^2 A^\pi (s_t, a_t)^2\\ &\le  \epsilon^2 \E_{\tau \sim \pi}  \sum_{t \ge 0}^T \gamma^t  C_{\rho}^{2 \norm {\alpha_t}_1} =
    \epsilon^2   \sum_{t \ge 0}^T \gamma^t  C_{\rho}^{2 \norm {\alpha_t}_1}, 
\end{align}
and because  $\mathbb{V} X = \E X^2 - [\E X]^2 \le \E X^2$ we also have
\begin{align}
     \V_{\tau \sim \pi} \Big (\sum_{t \ge 0}^T \gamma^t
    \rhoa A^\pi (s_t, a_t) \Big )^
    \le \frac{1-\gamma^{T+1}}{1-\gamma} \epsilon^2  \sum_{t \ge 0}^T \gamma^t  C_{\rho}^{2 \norm {\alpha_t}_1}.
\end{align}
\end{proof}

\begin{proof}[Derivation of \reflemma{lem:L_bias}]
We have that 
\begin{align}
    & \sum_{t \ge 0}^T \gamma^t \rhoa A^\pi (s_t, a_t) -  \sum_{t \ge 0}^T \gamma^t \rho_{0:t} A^\pi (s_t, a_t)    =
          \sum_{t \ge 0}^T \gamma^t \big (\rhoa  - \rho_{0:t} \big ) A^\pi (s_t, a_t)   =   \sum_{t \ge 0}^T  \gamma^t \rho_{0:t} \big (\rhoa  - 1 \big ) A^\pi (s_t, a_t)   .
\end{align}
By taking expectation $\E_{\tau \sim \pi}$ on both sides we obtain
\begin{align}
    \E_{\tau \sim \pi} \sum_{t \ge 0}^T \gamma^t \rhoa A^\pi (s_t, a_t) -  \sum_{t \ge 0}^T \gamma^t \rho_{0:t} A^\pi (s_t, a_t) = \E_{\tau \sim \pi} \sum_{t \ge 0}^T  \gamma^t \rho_{0:t} \big (\rhoa  - 1 \big ) A^\pi (s_t, a_t)   .
\end{align}
Next we apply absolute value to both sides which results in 
\begin{align}
      &\big | \E_{\tau \sim \pi} \sum_{t \ge 0}^T \gamma^t \rhoa A^\pi (s_t, a_t) -  \sum_{t \ge 0}^T \gamma^t \rho_{0:t} A^\pi (s_t, a_t) \big | = \big |\E_{\tau \sim \pi} \sum_{t \ge 0}^T  \gamma^t \rho_{0:t} \big (\rhoa  - 1 \big ) A^\pi (s_t, a_t)  \big | \\
      & =\big | \E_{\tau \sim \tilde \pi} \sum_{t \ge 0}^T  \gamma^t \big (\rhoa  - 1 \big ) A^\pi (s_t, a_t)\big | \le \E_{\tau \sim \tilde \pi} \big |\sum_{t \ge 0}^T  \gamma^t \big (\rhoa  - 1 \big ) A^\pi (s_t, a_t) \big |\\
      & \le \E_{\tau \sim \tilde \pi} \sum_{t \ge 0}^T  \gamma^t \big | \rhoa  - 1  \big | \big | A^\pi (s_t, a_t) \big | \le \E_{\tau \sim \tilde \pi} \epsilon \sum_{t \ge 0}^T  \gamma^t \big | \rhoa  - 1  \big |.
\end{align}
where we used $\epsilon = \max_{s \in \mathcal{S}, a \in \mathcal{A}} A^\pi(s,a)$.
\end{proof}

\begin{proof}[ Derivation of \reflemma{lem:nablaL_var}]
Recall that based on \refeqn{eqn:grad} we have that
\begin{equation}
         \nabla \rhoa (s_{0:t} , a_{0:t})  A^\pi (s_t, a_t) = \rhoa \sum_{i=0}^t \alpha_t^i \frac{\pi(a_i|s_i)}{\tilde\pi(a_i|s_i)} \nabla \tilde\pi(a_i|s_i)  A^\pi (s_t, a_t).
\end{equation}
We will use the fact that $\mathrm{tr} \C [X,X] = \sum_{i=1}^N  \E X_i^2 - [\E X_i]^2 \le \sum_{i=1}^N  \E X_i^2  = \E \sum_{i=1}^N  X_i^2  = \E \norm{X}^2_2$ to obtain
\begin{align}
    & \mathrm{tr} \C_{\tau \sim \pi} \Big [\nabla \sum_{t\ge 0} \gamma^t \rhoa (s_{0:t} , a_{0:t})  A^\pi (s_t, a_t) ,\nabla \sum_{t\ge 0} \gamma^t \rhoa (s_{0:t} , a_{0:t})  A^\pi (s_t, a_t)   \Big ] \label{eqn:bound_on_tr}\\
    & \le    \E_{\tau \sim \pi} \norm{ \sum_{t\ge 0} \gamma^t \nabla \rhoa (s_{0:t} , a_{0:t})  A^\pi (s_t, a_t) } ^2_2 \\
    &=    \E_{\tau \sim \pi} \norm{\sum_{t\ge 0} \gamma^t \rhoa \sum_{i=0}^t \alpha_t^i \frac{\pi(a_i|s_i)}{\tilde\pi(a_i|s_i)} \nabla \tilde\pi(a_i|s_i)  A^\pi (s_t, a_t) }  ^2_2.
\end{align}
Therefore to obtain an upper bound on $\mathrm{tr} \C_{\tau \sim \pi} \Big [\nabla \sum_{t\ge 0} \gamma^t \rhoa (s_{0:t} , a_{0:t})  A^\pi (s_t, a_t) ,\nabla \sum_{t\ge 0} \gamma^t \rhoa (s_{0:t} , a_{0:t})  A^\pi (s_t, a_t)   \Big ]$ 
we proceed by upper bounding term $\norm{ \sum_{t\ge 0} \gamma^t \rhoa \sum_{i=0}^t \alpha_t^i \frac{\pi(a_i|s_i)}{\tilde\pi(a_i|s_i)} \nabla \tilde\pi(a_i|s_i)  A^\pi (s_t, a_t) }_2$ as follows:
\begin{align}
 &\norm{ \sum_{t\ge 0} \gamma^t \rhoa \sum_{i=0}^t \alpha_t^i \frac{\pi(a_i|s_i)}{\tilde\pi(a_i|s_i)} \nabla \tilde\pi(a_i|s_i)  A^\pi (s_t, a_t) }_2 \\
 & \le  \sum_{t\ge 0} \norm{ \gamma^t \rhoa \sum_{i=0}^t \alpha_t^i \frac{\pi(a_i|s_i)}{\tilde\pi(a_i|s_i)} \nabla \tilde\pi(a_i|s_i)  A^\pi (s_t, a_t) }_2 \\
  &  \le \sum_{t\ge 0} \gamma^t \rhoa \sum_{i=0}^t \norm{ \alpha_t^i \frac{\pi(a_i|s_i)}{\tilde\pi(a_i|s_i)} \nabla \tilde\pi(a_i|s_i)  A^\pi (s_t, a_t) }_2 \\
    &  \le \sum_{t\ge 0} \gamma^t \rhoa \sum_{i=0}^t  \alpha_t^i |A^\pi (s_t, a_t) | \Big |  \frac{\pi(a_i|s_i)}{\tilde\pi(a_i|s_i)} \Big | \norm{\nabla \tilde\pi(a_i|s_i)  }_2.
\end{align}
We further have that $\norm{\nabla \tilde\pi(a_i|s_i)  }_2 \le C_\partial^2$, $|A^\pi (s_t, a_t) | \le \epsilon$ and $\big |\frac{\tilde\pi(a_i|s_i)}{\pi(a_i|s_i)} \big | < C_\delta$ and $\norm{\alpha_t}_1 = \sum_{i=0}^t \alpha_t^i  $. It follows that 
\begin{align}
    & \sum_{t\ge 0} \gamma^t \rhoa \sum_{i=0}^t  \alpha_t^i |A^\pi (s_t, a_t) | \Big |  \frac{\pi(a_i|s_i)}{\tilde\pi(a_i|s_i)}  \Big | \norm{\nabla \tilde\pi(a_i|s_i)  }  ^2_2  \\
    & \le \epsilon C_\partial C_\delta \sum_{t\ge 0} \gamma^t \rhoa \sum_{i=0}^t \alpha_t^i  = \epsilon C_\partial  C_\delta \sum_{t\ge 0} \gamma^t \rhoa \norm{\alpha_t}_1.
\end{align}

We apply Cauchy-Schwarz inequality to obtain
\begin{align}
    \sum_{t\ge 0} \gamma^t \rhoa \norm{\alpha_t}_1 \le \sqrt{\sum_{t\ge 0} \gamma^t (\rhoa)^2}  \sqrt {\sum_{t\ge 0} \gamma^t \norm{\alpha_t}^2_1}.
\end{align}
So we have
\begin{align}
&\norm{ \sum_{t\ge 0} \gamma^t \rhoa \sum_{i=0}^t \alpha_t^i \frac{\pi(a_i|s_i)}{\tilde\pi(a_i|s_i)} \nabla \tilde\pi(a_i|s_i)  A^\pi (s_t, a_t) }_2  \le \epsilon C_\partial  C_\delta  \sqrt{\sum_{t\ge 0} \gamma^t (\rhoa)^2}  \sqrt {\sum_{t\ge 0} \gamma^t \norm{\alpha_t}^2_1}.
\end{align}
It follows that
\begin{align}
&\norm{ \sum_{t\ge 0} \gamma^t \rhoa \sum_{i=0}^t \alpha_t^i \frac{\pi(a_i|s_i)}{\tilde\pi(a_i|s_i)} \nabla \tilde\pi(a_i|s_i)  A^\pi (s_t, a_t) }_2^2  \le \epsilon^2 C_\partial^2  C_\delta^2  \sum_{t\ge 0} \gamma^t (\rhoa)^2  \sum_{t\ge 0} \gamma^t \norm{\alpha_t}^2_1.
\end{align}
So we have that
\begin{align}
    \E_{\tau \sim \pi} \norm{ \sum_{t\ge 0} \gamma^t \rhoa \sum_{i=0}^t \alpha_t^i \frac{\pi(a_i|s_i)}{\tilde\pi(a_i|s_i)} \nabla \tilde\pi(a_i|s_i)  A^\pi (s_t, a_t) }_2^2  \le  \epsilon^2 C_\partial^2  C_\delta^2   \sum_{t\ge 0} \gamma^t \norm{\alpha_t}^2_1  \E_{\tau \sim \pi} \sum_{t\ge 0} \gamma^t (\rhoa)^2 ,
\end{align}
We will now use the assumption hat $\E_{\tau \sim \pi} \sum_{t\ge 0} \gamma^t (\rhoa)^2 < C^\gamma$ so we have that $\sum_{t\ge 0} \gamma^t (\rhoa)^2< C^{\gamma}$. This yields
\begin{align}
    \E_{\tau \sim \pi} \norm{ \sum_{t\ge 0} \gamma^t \rhoa \sum_{i=0}^t \alpha_t^i \frac{\pi(a_i|s_i)}{\tilde\pi(a_i|s_i)} \nabla \tilde\pi(a_i|s_i)  A^\pi (s_t, a_t) }_2^2  \le  \epsilon^2 C_\partial^2  C_\delta^2 C^{\gamma} \sum_{t\ge 0} \gamma^t \norm{\alpha_t}^2_1 .
\end{align}

By comparing to inequalities obtained starting from  \ref{eqn:bound_on_tr} it follows that
\begin{align}
  \mathrm{tr} \C_{\tau \sim \pi} \Big [\nabla \sum_{t\ge 0} \gamma^t \rhoa (s_{0:t} , a_{0:t})  A^\pi (s_t, a_t) ,\nabla \sum_{t\ge 0} \gamma^t \rhoa (s_{0:t} , a_{0:t})  A^\pi (s_t, a_t)   \Big ] \le  \epsilon^2 C_\partial^2  C_\delta^2 C^{\gamma} \sum_{t\ge 0} \gamma^t \norm{\alpha_t}^2_1 ,
\end{align}
which concludes the proof.
\end{proof}

\begin{proof}[Derivation of \reflemma{lem:nablaL_bias}]
We first find an upper bound on $\norm{\nabla \sum_{t \ge 0} \rhoa A^\pi(s_t, a_t) - \nabla \sum_{t \ge 0} \rho_{0:t} A^\pi(s_t, a_t)}_2 $. We have that
\begin{align}
    &\norm{\nabla \sum_{t \ge 0} \rhoa A^\pi(s_t, a_t) - \nabla \sum_{t \ge 0} \rho_{0:t} A^\pi(s_t, a_t)}_2 \\
    &= \norm{\sum_{t \ge 0} \nabla \rhoa A^\pi(s_t, a_t) -  \nabla \rho_{0:t} A^\pi(s_t, a_t) \Big ]}_2 \\
    & \le \sum_{t \ge 0} \norm{\nabla \rhoa A^\pi(s_t, a_t) -  \nabla \rho_{0:t} A^\pi(s_t, a_t) }_2.
\end{align}
By \refeqn{eqn:grad} we can express $\nabla \rhoa (s_{0:t} , a_{0:t})  A^\pi (s_t, a_t) -  \nabla \rho_{0:t} (s_{0:t} , a_{0:t})  A^\pi (s_t, a_t)$ as
\begin{align}
    &\nabla \rhoa (s_{0:t} , a_{0:t})  A^\pi (s_t, a_t) -
     \nabla \rho_{0:t} (s_{0:t} , a_{0:t})  A^\pi (s_t, a_t)\\
     &= \rhoa \sum_{i=0}^t \alpha_t^i \frac{\pi(a_i|s_i)}{\tilde\pi(a_i|s_i)} \nabla \tilde\pi(a_i|s_i)  A^\pi (s_t, a_t) - \rho_{0:t} \sum_{i=0}^t \frac{\pi(a_i|s_i)}{\tilde\pi(a_i|s_i)} \nabla \tilde\pi(a_i|s_i)  A^\pi (s_t, a_t) \\
      &= \sum_{i=0}^t \alpha_t^i \rhoa  \frac{\pi(a_i|s_i)}{\tilde\pi(a_i|s_i)} \nabla \tilde\pi(a_i|s_i)  A^\pi (s_t, a_t) - \sum_{i=0}^t  \rho_{0:t} \frac{\pi(a_i|s_i)}{\tilde\pi(a_i|s_i)} \nabla \tilde\pi(a_i|s_i)  A^\pi (s_t, a_t) \\
      &= \sum_{i=0}^t \big (\alpha_t^i \rhoa   -  \rho_{0:t} \big ) \frac{\pi(a_i|s_i)}{\tilde\pi(a_i|s_i)} \nabla \tilde\pi(a_i|s_i)  A^\pi (s_t, a_t).
\end{align}
So it follows that
\begin{align}
    & \norm{\nabla \rhoa (s_{0:t} , a_{0:t})  A^\pi (s_t, a_t) -\nabla \rho_{0:t} (s_{0:t} , a_{0:t})  A^\pi (s_t, a_t)}_2 \\
    & \norm{ \sum_{i=0}^t \big (\alpha_t^i \rhoa   -  \rho_{0:t} \big ) \frac{\pi(a_i|s_i)}{\tilde\pi(a_i|s_i)} \nabla \tilde\pi(a_i|s_i)  A^\pi (s_t, a_t)}_2 \\
    & \le \sum_{i=0}^t \norm{ \big (\alpha_t^i \rhoa   -  \rho_{0:t} \big ) \frac{\pi(a_i|s_i)}{\tilde\pi(a_i|s_i)} \nabla \tilde\pi(a_i|s_i)  A^\pi (s_t, a_t)}_2 \\
    &  = \sum_{i=0}^t  \big | \alpha_t^i \rhoa   -  \rho_{0:t} \big |  \big | \frac{\pi(a_i|s_i)}{\tilde\pi(a_i|s_i)}\big | \big |   A^\pi (s_t, a_t) \big |
    \norm{\nabla \tilde\pi(a_i|s_i) }_2. 
\end{align}
Recall that we have that $\epsilon = \max_{s \in \mathcal S,a \in \mathcal A}|A^\pi(s,a) |$ and $\norm{\nabla \tilde\pi(a_i|s_i) }_2 \le C^\partial$. We proceed by further bounding the term $\sum_{i=0}^t  \big | \alpha_t^i \rhoa   -  \rho_{0:t} \big |  \big | \frac{\pi(a_i|s_i)}{\tilde\pi(a_i|s_i)}\big | \big |   A^\pi (s_t, a_t) \big |\norm{\nabla \tilde\pi(a_i|s_i) }_2 $ in the following way:
\begin{align}
&\sum_{i=0}^t  \big | \alpha_t^i \rhoa   -  \rho_{0:t} \big |  \big | \frac{\pi(a_i|s_i)}{\tilde\pi(a_i|s_i)}\big | \big |   A^\pi (s_t, a_t) \big |
    \norm{\nabla \tilde\pi(a_i|s_i) }_2 \\
      &  \le \epsilon C^\partial \sum_{i=0}^t  \big | \alpha_t^i \rhoa   -  \rho_{0:t} \big | \le \epsilon C^\partial  \sum_{i=0}^t  \big | \alpha_t^i \rhoa - \rhoa + \rhoa  - \rho_{0:t} \big | \\ 
  &  \le \epsilon C^\partial  \sum_{i=0}^t  \big | (\alpha_t^i -1)\rhoa| + |\rhoa  - \rho_{0:t} \big |  = \epsilon C^\partial  \sum_{i=0}^t  \big | (\alpha_t^i -1)\rhoa| + \sum_{i=0}^t |\rhoa  - \rho_{0:t} \big | \\
      & = \epsilon C^\partial  \sum_{i=0}^t  \big | (\alpha_t^i -1)\rhoa| + (t+1) |\rhoa  - \rho_{0:t} \big | = \epsilon C^\partial  \rhoa \sum_{i=0}^t  \big | \alpha_t^i -1| + \sum_{i=0}^t |\rhoa  - \rho_{0:t} \big | \\
      & = \epsilon C^\partial  \sum_{i=0}^t  \big | (\alpha_t^i -1)\rhoa| + (t+1) |\rhoa  - \rho_{0:t} \big |  = \epsilon C^\partial  \rhoa \sum_{i=0}^t  \big | \alpha_t^i -1| +  (t+1) |\rhoa  - \rho_{0:t} \big | \\
  & = \epsilon C^\partial  \rhoa \norm{ \alpha_t -\mathbf{1}}_1 +  (t+1) |\rhoa  - \rho_{0:t} \big |.
\end{align}
As a result we have that
\begin{align}
    &\norm{\nabla \sum_{t \ge 0} \gamma^t \rhoa A^\pi(s_t, a_t) - \nabla \sum_{t \ge 0} \gamma^t \rho_{0:t} A^\pi(s_t, a_t)}_2 \le  \epsilon C^\partial  \sum_{t \ge 0}  \gamma^t \Big [ \rhoa \norm{ \alpha_t -\mathbf{1}}_1 +  (t+1) |\rhoa  - \rho_{0:t} \big | \Big ].
\end{align}
\end{proof}

\begin{proof}[Derivation of \reflemma{lem:mpie}]
The equality in \refeqn{eqn:lem_kakade} is symmetrical w.r.t. policies $\pi$ and $\tilde\pi$. We can write
\begin{equation}
\eta(\pi) - \eta(\tilde\pi) = \frac{1}{1 - \gamma} \mathbb{E}_{s \sim d^{\pi}, a \sim \pi (\cdot|s)}  A^{\tilde\pi} (s,a).
\end{equation}
It follows that
\begin{align}
&\eta(\pi) - \eta(\tilde\pi) = \frac{1}{1 - \gamma} \mathbb{E}_{s \sim d^{\pi}, a \sim \pi(\cdot|s)}  A^{\tilde\pi} (s,a) =   \\
& = \frac{1}{1 - \gamma} \mathbb{E}_{s \sim d^{\pi}} \Big[\int \tilde \pi (a|s) A^\pi (s,a) da -\int  \tilde \pi (a|s) A^\pi (s,a) da + \int  \pi (a|s) A^{\tilde\pi}(s,a) da \Big]  \\
& = -L_\pi(\tilde\pi) + \frac{1}{1 - \gamma} \mathbb{E}_{s \sim d^{\pi}} \Big[ \int  \tilde \pi (a|s) A^\pi (s,a) da + \int  \pi (a|s) A^{\tilde\pi}(s,a) da \Big]  \\
&  = -L_\pi(\tilde\pi) + \frac{1}{1 - \gamma} \mathbb{E}_{s \sim d^{\pi}} \Big[\int  \tilde \pi (a|s) Q^\pi (s,a) da + \int  \pi (a|s) Q^{\tilde\pi}(s,a) da - V^{\tilde\pi}(s) - V^{\pi}(s) \Big]  \\
&  = - L_\pi(\tilde\pi) + \frac{1}{1 - \gamma} \mathbb{E}_{s \sim d^{\pi}} \Big[\int  \tilde \pi (a|s) Q^\pi (s,a) da + \int  \pi (a|s) Q^{\tilde\pi}(s,a) da \nonumber \\ 
&  - \int \tilde\pi(a|s) Q^{\tilde\pi}(s, a) da - \int \pi(a|s) Q^{\pi}(s, a) da \Big]  \\
&  = - L_\pi(\tilde\pi) + \frac{1}{1 - \gamma} \mathbb{E}_{s \sim d^{\pi}} \Big[ \int \tilde\pi(a|s) \big(Q^{\pi}(s, a) - Q^{\tilde\pi}(s, a)\big) da  + \int \pi(a|s) \big(Q^{\tilde\pi}(s, a) - Q^{\pi}(s, a)\big) da  \Big]  \\
&  = -L_\pi(\tilde\pi) + \frac{1}{1 - \gamma} \mathbb{E}_{s \sim d^{\pi}} \Big[ \int \tilde\pi(a|s) \big(Q^{\pi}(s, a) - Q^{\tilde\pi}(s, a)\big) da  - \int \pi(a|s) \big(Q^{\pi}(s, a) - Q^{\tilde\pi}(s, a)\big) da  \Big]  \\
&  = - L_\pi(\tilde\pi) + \frac{1}{1 - \gamma} \mathbb{E}_{s \sim d^{\pi}} \Big[ \int \big(\tilde\pi(a|s) - \pi(a|s)\big)\big(Q^{\pi}(s, a) - Q^{\tilde\pi}(s, a) \big)da \Big] \\
&  = - L_\pi(\tilde\pi) - \frac{1}{1 - \gamma} \mathbb{E}_{s \sim d^{\pi}} \Big[ \int \big(\tilde\pi(a|s) - \pi(a|s)\big)\big(Q^{\tilde\pi}(s, a) - Q^{\pi}(s, a) \big) da \Big].
\end{align}
So we obtain that
\begin{align}
\eta(\pi) - \eta(\tilde\pi) = -L_\pi(\tilde\pi) -  \frac{1}{1 - \gamma} \mathbb{E}_{s \sim d^{\pi}}  \int \big(\tilde\pi(a|s) - \pi(a|s)\big)\big(Q^{\tilde\pi}(s, a) - Q^{\pi}(s, a) \big) da.
\end{align}
After multiplying by $-1$ this gives
\begin{align}
\eta(\tilde\pi) - \eta(\pi)= L_\pi(\tilde\pi) + \frac{1}{1 - \gamma} \mathbb{E}_{s \sim d^{\pi}}  \int \big(\tilde\pi(a|s) - \pi(a|s)\big)\big(Q^{\tilde\pi}(s, a) - Q^{\pi}(s, a) \big) da.
\end{align}
\end{proof}
Note that the integral $\int \big(\tilde\pi(a|s) - \pi(a|s)\big)\big(Q^{\tilde\pi}(s, a) - Q^{\pi}(s, a) \big) da$
can be viewed as a weighted total variation distance between $\pi$ and $\tilde \pi$ where weights are defined by $Q^{\tilde\pi}(s, a) - Q^{\pi}(s, a) $. It follows that the divergence of policies $\pi$ and $\tilde\pi$ on state-action pairs $(s,a)$ having large difference in value $Q^{\tilde\pi}(s, a) - Q^{\pi}(s, a)$ is causing degradation in the performance of approximation of $L_\pi(\tilde\pi)$. Also $Q^{\tilde\pi}(s, a) - Q^{\pi}(s, a)$ can be viewed as weighting current difference between policies $\pi$ and $\tilde\pi$ by delayed effect of the difference arising from $Q^{\tilde\pi}(s, a) - Q^{\pi}(s, a)$.

\subsection{Derivations of the results from literature}

We first provide the following Corollary which we will use to derive Policy Gradient Theorems. The result follows from the following algebraic transformations applied to \refeqn{eqn:mpie}.

\begin{corollary}[Value dependency equality]
\label{cor:vde}
Given two policies $\tilde\pi$ and $\pi$,
\begin{equation}
\label{eqn:vde}
\eta(\tilde\pi) - \eta(\pi) = \frac{1}{1-\gamma} \E_{s \sim d^\pi} \int \big( \tilde\pi(a|s) - \pi(a|s) \big) Q^{\tilde\pi}(s,a) da.
\end{equation}
\end{corollary}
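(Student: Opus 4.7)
The plan is to derive Corollary \ref{cor:vde} directly from Lemma \ref{lem:mpie}, treating it as a short algebraic consequence. Since Lemma \ref{lem:mpie} already decomposes $\eta(\tilde\pi) - \eta(\pi)$ into $L_\pi(\tilde\pi)$ plus a correction term involving $\Delta^Q_{\pi,\tilde\pi}(s,a) = Q^{\tilde\pi}(s,a) - Q^\pi(s,a)$, the only thing to do is combine the two pieces and show that the $Q^\pi$ terms cancel in exactly the right way to leave $Q^{\tilde\pi}$ alone.

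First I would rewrite $L_\pi(\tilde\pi)$ to expose an integrand of the form $[\tilde\pi(a|s) - \pi(a|s)] Q^\pi(s,a)$. Starting from $L_\pi(\tilde\pi) = \frac{1}{1-\gamma}\E_{s\sim d^\pi}\int \tilde\pi(a|s) A^\pi(s,a)\,da$ and using $A^\pi(s,a) = Q^\pi(s,a) - V^\pi(s)$, I note that $\int \tilde\pi(a|s)V^\pi(s)\,da = V^\pi(s) = \int \pi(a|s)Q^\pi(s,a)\,da$, so $V^\pi(s)$ can be replaced by $\int \pi(a|s)Q^\pi(s,a)\,da$. This yields
\begin{equation*}
L_\pi(\tilde\pi) = \frac{1}{1-\gamma}\E_{s\sim d^\pi}\int \bigl(\tilde\pi(a|s) - \pi(a|s)\bigr) Q^\pi(s,a)\,da.
\end{equation*}

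Next I would substitute this form of $L_\pi(\tilde\pi)$ into the statement of Lemma \ref{lem:mpie}. The right-hand side then becomes
\begin{equation*}
\frac{1}{1-\gamma}\E_{s\sim d^\pi}\int \bigl(\tilde\pi(a|s)-\pi(a|s)\bigr)\bigl[Q^\pi(s,a) + Q^{\tilde\pi}(s,a) - Q^\pi(s,a)\bigr]\,da,
\end{equation*}
and the $Q^\pi$ contributions telescope, leaving $Q^{\tilde\pi}(s,a)$ as the sole weight. This is exactly the claimed identity.

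I do not expect a genuine obstacle here: the work is entirely algebraic manipulation once Lemma \ref{lem:mpie} is granted. The only mildly subtle step is the identity $\int \tilde\pi(a|s)V^\pi(s)\,da = \int \pi(a|s)Q^\pi(s,a)\,da$ used to rewrite $L_\pi(\tilde\pi)$ in the form with a policy \emph{difference} against $Q^\pi$; this is just the normalization of $\tilde\pi(\cdot|s)$ together with the standard relation $V^\pi(s) = \int \pi(a|s)Q^\pi(s,a)\,da$. Once this rewriting is in place, the cancellation with the $\Delta^Q_{\pi,\tilde\pi}$ term is immediate.
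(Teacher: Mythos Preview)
Your proposal is correct and follows essentially the same route as the paper's primary proof: start from \reflemma{lem:mpie}, expand $L_\pi(\tilde\pi)$ via $A^\pi = Q^\pi - V^\pi$ and $V^\pi(s)=\int\pi(a|s)Q^\pi(s,a)\,da$ to obtain the policy-difference-against-$Q^\pi$ form, and then let the $Q^\pi$ contributions cancel against the $\Delta^Q_{\pi,\tilde\pi}$ term. The paper also records an alternative derivation that bypasses \reflemma{lem:mpie} and uses the symmetry of \refeqn{eqn:lem_kakade} directly, but your argument matches the main one.
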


\begin{proof}
\begin{align}
& \eta(\tilde\pi) = \eta(\pi) + L_\pi(\tilde \pi) + \frac{1}{1-\gamma} \E_{s \sim d^\pi}  \int \big (Q^{\tilde\pi}(s,a) - Q^{\pi}(s,a) \big)\big( \tilde\pi(a|s) - \pi(a|s) \big) da  \\
& = \eta(\pi) + \frac{1}{1-\gamma} \E_{s \sim d^\pi} \int \tilde\pi (a|s) A^\pi (s,a) da + \frac{1}{1-\gamma} \E_{s \sim d^\pi} \int \big(Q^{\tilde\pi}(s,a) - Q^{\pi}(s,a) \big) \big ( \tilde\pi(a|s) - \pi(a|s) \big) da \\
& = \eta(\pi) + \frac{1}{1-\gamma} \E_{s \sim d^\pi} \Big [  \int\tilde\pi (a|s) Q^\pi (s,a) da  - V^\pi(s) \Big ]
 + \frac{1}{1-\gamma} \E_{s \sim d^\pi}  \int \big(Q^{\tilde\pi}(s,a) - Q^{\pi}(s,a) \big) \big( \tilde\pi(a|s) - \pi(a|s) \big) da\\
& = \eta(\pi) + \frac{1}{1-\gamma} \E_{s \sim d^\pi} \Big [   \int\tilde\pi (a|s) Q^\pi (s,a) da  - \int \pi (a|s) Q^\pi (s,a) da \Big ] + \nonumber \\
& + \frac{1}{1-\gamma} \E_{s \sim d^\pi}  \int \Big [ \tilde\pi(a|s) Q^{\tilde\pi}(s,a) - \tilde\pi(a|s) Q^{\pi}(s,a)  - \pi(a|s) Q^{\tilde\pi}(s,a) + \pi(a|s) Q^{\pi}(s,a) \Big ] da \\
& = \eta(\pi) + \frac{1}{1-\gamma} \E_{s \sim d^\pi}  \int  \tilde\pi(a|s) Q^{\tilde\pi}(s,a)  - \pi(a|s) Q^{\tilde\pi}(s,a) da \\
& = \eta(\pi) + \frac{1}{1-\gamma} \E_{s \sim d^\pi}  \int \big(\tilde\pi(a|s) - \pi(a|s)\big) Q^{\tilde\pi}(s,a) da.
\end{align}
\end{proof}

\begin{proof}[Alternative derivation of \refcor{cor:vde}]
Again, by using symmetry in \refeqn{eqn:lem_kakade} we obtain:
\begin{equation}
    \eta(\pi) - \eta(\tilde\pi) =  + \frac{1}{1-\gamma} \mathbb{E}_{s \sim d^\pi} \int \pi(a|s) A^{\tilde\pi}(s,a) da.
\end{equation}
It follows that
\begin{align}
    &\eta(\tilde\pi) = \eta(\pi) - \frac{1}{1-\gamma} \mathbb{E}_{s \sim d^\pi} \int \pi(a|s) A^{\tilde\pi}(s,a) da\\ 
    &\eta(\tilde\pi) = \eta(\pi) - \frac{1}{1-\gamma} \mathbb{E}_{s \sim d^\pi} \Big [ -\int \tilde \pi(a|s) A^{\tilde\pi}(s,a) da  + \int \pi(a|s) A^{\tilde\pi}(s,a) da \Big ]\\ 
    &\eta(\tilde\pi) = \eta(\pi) - \frac{1}{1-\gamma} \mathbb{E}_{s \sim d^\pi} \Big [ \int \tilde \pi(a|s) \Big ( V^{\tilde\pi}(s) - Q^{\tilde\pi}(s,a)  \Big ) da + \int \pi(a|s) \Big ( Q^{\tilde\pi}(s,a)- V^{\tilde\pi}(s) \Big ) da  \Big ]\\ 
  &\eta(\tilde\pi) = \eta(\pi) - \frac{1}{1-\gamma} \mathbb{E}_{s \sim d^\pi} \Big [ -\int \tilde \pi(a|s) Q^{\tilde\pi}(s,a) da   + V^{\tilde\pi}(s)  + \int \pi(a|s)  Q^{\tilde\pi}(s,a)da - V^{\tilde\pi}(s)   \Big ]\\ 
    &\eta(\tilde\pi) = \eta(\pi) - \frac{1}{1-\gamma} \mathbb{E}_{s \sim d^\pi} \Big [ -\int \tilde \pi(a|s) Q^{\tilde\pi}(s,a) da + \int \pi(a|s) Q^{\tilde\pi}(s,a) da \Big ]\\ 
    &\eta(\tilde\pi) = \eta(\pi) + \frac{1}{1-\gamma} \mathbb{E}_{s \sim d^\pi}  \int \big (\tilde \pi(a|s)  - \pi(a|s) \big) Q^{\tilde\pi}(s,a) da.
\end{align}
\end{proof}

We begin by providing short derivations of theorems obtained in previous works of \citet{Sutton:1999:PGM:3009657.3009806,pmlr-v32-silver14, ciosek2018expected}. To simplify notation, for parametrized policies, we denote target policy $\tilde \pi$ as $\pi_\theta$ and behavior policy $\pi$ as $\pi_{\theta_0}$. 

\refeqn{eqn:vde} requires access to state-action value function $Q^{\pi_\theta}$ of the target policy. Nevertheless, we show that \refcor{cor:vde} can be used to unify and easily derive previously proven policy gradient theorems \citep{Sutton:1999:PGM:3009657.3009806, pmlr-v32-silver14, ciosek2018expected}. These results follow from the fact that the state occupancy measure on the RHS does not depend on target policy $\pi_\theta$, so the RHS can be easily differentiated with respect to its parameters.

\begin{proof}[Derivation of Policy Gradient Theorem (Theorem 1) from \cite{Sutton:1999:PGM:3009657.3009806}]
We differentiate expression for $\eta(\pi_\theta)$ from Corollary \ref{cor:vde}:
\begin{align}
&\frac{\partial}{\partial\theta}\eta(\pi_\theta) = \frac{\partial}{\partial\theta} \Bigg[ \eta(\pi_{\theta_0}) + \frac{1}{1-\gamma} \E_{s \sim d^\pi} \int Q^{\pi_{\theta}}(s,a) \big( \pi_{\theta}(a|s) - \pi_{\theta_0}(a|s) \big)da  \Bigg ] = \\
& = \frac{1}{1-\gamma} \Bigg [ \E_{s \sim d^\pi} \int \frac{\partial}{\partial\theta}  Q^{\pi_\theta}(s,a) \big( \pi_{\theta}(a|s) - \pi_{\theta_0}(a|s) \big) da +  
 \int  Q^{\pi_\theta}(s,a) \frac{\partial}{\partial\theta} \big ( \pi_\theta(a|s) - \pi_{\theta_0}(a|s) \big) da \Bigg ] \\
& =\frac{1}{1-\gamma} \Bigg [ \E_{s \sim d^\pi} \int \frac{\partial}{\partial\theta}  Q^{\pi_\theta}(s,a) \big( \pi_{\theta}(a|s) - \pi_{\theta_0}(a|s) \big) da +  
 \int  Q^{\pi_\theta}(s,a) \frac{\partial}{\partial\theta}\pi_\theta(a|s) da \Bigg ].
\end{align}
By evaluating derivative at $\theta = \theta_0$ we obtain:
\begin{equation}
\frac{\partial}{\partial\theta} \eta(\pi_\theta)\Big|_{\theta= \theta_0} =  \frac{1}{1-\gamma} \E_{s \sim d^\pi} \int \frac{\partial}{\partial\theta}\pi_{\theta}(a|s)\Big|_{\theta= \theta_0}  Q^{\pi_{\theta_0}}(s,a) da.
\end{equation}
After applying $\frac{\partial}{\partial x} f(x) = f(x) \frac{\partial}{\partial x} \log f(x)$:
\begin{equation}
\frac{\partial}{\partial\theta} \eta(\pi_\theta)\Big|_{\theta= \theta_0} =  \frac{1}{1-\gamma}\E_{s \sim d^\pi} \int \pi_{\theta_0}(a|s) \frac{\partial}{\partial\theta}\log \pi_{\theta}(a|s)\Big|_{\theta= \theta_0}  Q^{\pi_{\theta_0}}(s,a) da.
\end{equation}
\end{proof}

\begin{proof}[Derivation of Deterministic Policy Gradient Theorem (Theorem 1 from \citet{pmlr-v32-silver14})]
Again, we differentiate the expression for $\eta(\pi_\theta)$ from Corollary \ref{cor:vde}
\begin{align}
& \frac{\partial}{\partial\theta}\eta(\pi_\theta) = \frac{\partial}{\partial\theta} \Bigg[ \eta(\pi_{\theta_0}) + \frac{1}{1-\gamma} \E_{s \sim d^\pi} Q^{\pi_{\theta}}(s,\pi_{\theta}(s))  - Q^{\pi_{\theta}}(s,\pi_{\theta_0}(s))\Bigg ] = \\
& = \frac{1}{1-\gamma}  \Bigg[\E_{s \sim d^\pi}  \frac{\partial}{\partial\theta} Q^{\pi_{\theta}}(s,\pi_{\theta}(s))  -  \frac{\partial}{\partial\theta} Q^{\pi_{\theta}}(s,\pi_{\theta_0}(s)) \Bigg] \\
& = \frac{1}{1-\gamma}  \Bigg[\E_{s \sim d^\pi}  \frac{\partial}{\partial a} Q^{\pi_{\theta}}(s,a) \frac{\partial}{\partial \theta} \pi_{\theta}(s) +   \frac{\partial}{\partial\theta} Q^{\pi_{\theta}}(s,\pi_{\theta}(s)) -  \frac{\partial}{\partial\theta} Q^{\pi_{\theta}}(s,\pi_{\theta_0}(s)) \Bigg].
\end{align}
Where we calculate total derivative = $ \frac{\partial}{\partial\theta} Q^{\pi_{\theta}}(s,\pi_{\theta}(s)) = \frac{\partial}{\partial \pi_{\theta}(s)} Q^{\pi_{\theta}}(s,\pi_{\theta}(s)) + \frac{\partial}{\partial\theta} Q^{\pi_{\theta}}(s,\pi_{\theta}(s))$ and then use chain rule to get
$\frac{\partial}{\partial \pi_{\theta}(s)} Q^{\pi_{\theta}}(s,\pi_{\theta}(s)) =  \frac{\partial}{\partial a} Q^{\pi_{\theta}}(s,a) |_{a= \pi_{\theta} (s)} \frac{\partial}{\partial \theta} \pi_{\theta}(s)$. By evaluating derivative at $\theta = \theta_0$ we obtain:
\begin{equation}
\frac{\partial}{\partial\theta} \eta(\pi_\theta)\Big|_{\theta= \theta_0} =  \frac{1}{1-\gamma} \E_{s \sim d^\pi}  \frac{\partial}{\partial a} Q^{\pi_{\theta_0}}(s,a) \Big|_{a= \pi_{\theta_0} (s)} \frac{\partial}{\partial \theta} \pi_{\theta}(s)\Big|_{\theta= \theta_0}.
\end{equation}
\end{proof}

\begin{proof}[Derivation of General Policy Gradient Theorem (Theorem 1 from \citet{ciosek2018expected})]
This result also follows from differentiating expression for $\eta(\pi_\theta)$ from Corollary \ref{cor:vde}
\begin{align}
&\frac{\partial}{\partial\theta}\eta(\pi_\theta) = \frac{\partial}{\partial\theta} \Bigg[ \eta(\pi_{\theta_0}) + \frac{1}{1-\gamma} \E_{s \sim d^\pi} \int Q^{\pi_{\theta}}(s,a) \big ( \pi_{\theta}(a|s) - \pi_{\theta_0}(a|s) \big ) da \Bigg ] \\
& = \frac{\partial}{\partial\theta} \frac{1}{1-\gamma} \E_{s \sim d^\pi} \Bigg[\int Q^{\pi_{\theta}}(s,a) \pi_{\theta}(a|s) da -\int  \pi_{\theta_0}(a|s) Q^{\pi_{\theta}}(s,a) da\Bigg]= \\ 
& = \frac{\partial}{\partial\theta} \frac{1}{1-\gamma} \E_{s \sim d^\pi}\Bigg[ V^{\pi_\theta}(s) -\int  \pi_{\theta_0}(a|s) Q^{\pi_{\theta}}(s,a) da \Bigg]  \\ 
& = \frac{1}{1-\gamma} \E_{s \sim d^\pi} \Bigg[ \frac{\partial}{\partial\theta} V^{\pi_\theta}(s) - \int  \pi_{\theta_0}(a|s) \frac{\partial}{\partial\theta} Q^{\pi_{\theta}}(s,a) da\Bigg].
\end{align}
By evaluating derivative at $\theta = \theta_0$ we derive
\begin{equation}
\frac{\partial}{\partial\theta}\eta(\pi_\theta)\Big|_{\theta= \theta_0} = \frac{1}{1-\gamma} \E_{s \sim d^\pi} \Bigg [ \frac{\partial}{\partial\theta} V^{\pi_\theta}(s) \Big|_{\theta= \theta_0} - \int  \pi_{\theta_0}(a|s) \frac{\partial}{\partial\theta} Q^{\pi_{\theta}}(s,a)\Big|_{\theta= \theta_0} da \Bigg ].
\end{equation}
\end{proof}

Next, we derive the previously obtained bounds on the quality of approximation of $L_\pi(\tilde\pi)$. We firstly prove a lemma used in these derivations.
\begin{lemma}
\label{lem:inverse_kakade}
Given two policies $\tilde\pi$ and $\pi$, the following equality holds
\begin{align}
&\E_{\tau \sim \tilde \pi | s, a} \sum_{t \ge 1} \gamma^{t-1} A^\pi(s_t,a_t) = \frac{1}{\gamma} \big ( Q^{\tilde\pi}(s,a) - Q^\pi(s,a) \big ).
\end{align}
\end{lemma}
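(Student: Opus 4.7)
The plan is to reduce this to a conditional version of Kakade's performance difference lemma by re-indexing the sum and exploiting the Markov property.

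First, I would substitute $u=t-1$ so that
\begin{equation*}
    \E_{\tau \sim \tilde\pi | s,a} \sum_{t \ge 1} \gamma^{t-1} A^\pi(s_t, a_t) = \E_{\tau \sim \tilde\pi | s,a} \sum_{u \ge 0} \gamma^{u} A^\pi(s_{u+1}, a_{u+1}).
\end{equation*}
Next I would invoke the Markov property: conditional on $(s_0,a_0)=(s,a)$, the shifted sub-trajectory $(s_{u+1}, a_{u+1})_{u \ge 0}$ is distributed exactly like a fresh trajectory under $\tilde\pi$ initialised at a state $s'$ drawn from the transition kernel $p(\cdot|s,a)$. Pulling this outer expectation over $s'$ to the front gives
\begin{equation*}
    \E_{\tau \sim \tilde\pi | s,a} \sum_{u \ge 0} \gamma^{u} A^\pi(s_{u+1}, a_{u+1}) = \E_{s' \sim p(\cdot|s,a)} \E_{\tau \sim \tilde\pi | s'} \sum_{u \ge 0} \gamma^u A^\pi(s_u, a_u).
\end{equation*}

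Second, I would apply the state-conditional form of \refeqn{eqn:lem_kakade}, namely $\E_{\tau \sim \tilde\pi|s'} \sum_{u \ge 0} \gamma^u A^\pi(s_u, a_u) = V^{\tilde\pi}(s') - V^\pi(s')$. This is the standard telescoping-of-advantages argument applied with the initial distribution $\mu_0 = \delta_{s'}$, and can either be cited or re-derived in one line by writing $A^\pi$ as $r + \gamma V^\pi(\cdot) - V^\pi(\cdot)$ and telescoping. After this substitution we obtain
\begin{equation*}
    \E_{\tau \sim \tilde\pi | s,a} \sum_{t \ge 1} \gamma^{t-1} A^\pi(s_t, a_t) = \E_{s' \sim p(\cdot|s,a)}\bigl[V^{\tilde\pi}(s') - V^\pi(s')\bigr].
\end{equation*}

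Finally, I would close the loop using the Bellman equations $Q^{\tilde\pi}(s,a) = r(s,a) + \gamma \E_{s' \sim p(\cdot|s,a)} V^{\tilde\pi}(s')$ and the analogous identity for $\pi$. Subtracting these yields $Q^{\tilde\pi}(s,a) - Q^\pi(s,a) = \gamma \E_{s' \sim p(\cdot|s,a)}[V^{\tilde\pi}(s') - V^\pi(s')]$, so dividing by $\gamma$ matches the right-hand side of the lemma and completes the argument. The only subtlety, and the step that deserves the most care, is the Markov/shift step: one has to be explicit that conditioning on $(s_0,a_0)=(s,a)$ does not alter the subsequent dynamics because $\tilde\pi$ is stationary and the transition kernel $p$ is Markov, so that the shifted process really is an independent copy of a $\tilde\pi$-trajectory started from $s' \sim p(\cdot|s,a)$. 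Beyond that the calculation is essentially a one-step Bellman expansion of the conditional performance difference lemma.
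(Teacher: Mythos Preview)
Your proposal is correct and follows essentially the same route as the paper's proof: both arguments reduce the left-hand side to $\E_{s'\sim p(\cdot|s,a)}\bigl[V^{\tilde\pi}(s')-V^{\pi}(s')\bigr]$ and then invoke the Bellman identities for $Q^{\tilde\pi}$ and $Q^{\pi}$ to obtain $\tfrac{1}{\gamma}\bigl(Q^{\tilde\pi}(s,a)-Q^{\pi}(s,a)\bigr)$. The only cosmetic difference is that you shift the index and cite the state-conditional performance-difference lemma, whereas the paper expands $A^\pi = r + \gamma V^\pi(\cdot)-V^\pi(\cdot)$ and telescopes the $V^\pi$ terms inline; the mathematical content is identical.
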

\begin{proof}
We apply the following algebraic transformations
\begin{align}
& \E_{\tau \sim \tilde \pi} [\sum_{t \ge 1} \gamma^{t-1} A^\pi(s_t,a_t) | s_0 = s, a_0 = a] = \sum_{t \ge 1} \E_{s_t, s_{t+1}}[  \gamma^{t-1} \big (r(s_t, a_t) + \gamma V^{\pi}(s_{t+1}) - V^{\pi}(s_t) \big ) | s_0 = s, a_0 = a] \\
& = \E_{\tau \sim \tilde \pi} [- V^{\pi}(s_1) +  \sum_{t \ge 1} \gamma^{t-1} r(s_t, a_t) | s_0 = s, a_0 = a]  = \E_{p(s' |s, a)} - V^{\pi}(s') +  V^{\tilde\pi}(s') =  \E_{p(s' |s, a)} V^{\tilde\pi}(s') - V^{\pi}(s')  \\
& = \frac{1}{\gamma} \E_{p(s'|s,a)} \gamma V^{\tilde\pi}(s') - \gamma V^{\pi}(s') 
 = \frac{1}{\gamma} \E_{p(s'|s,a)} \gamma V^{\tilde\pi}(s') + r(s,a) - \gamma V^{\pi}(s') - r(s,a)  = \frac{1}{\gamma} \big( Q^{\tilde\pi}(s,a) - Q^\pi(s,a) \big ).
\end{align}
\end{proof}

\begin{proof}[Derivation of Corollary 1 from from \citet{pmlr-v70-achiam17a}]
Note that when $\epsilon = \max_s | \E_{a \sim \tilde \pi(\cdot|s)} A^\pi (s,a) |$ we have that $ \frac{\epsilon}{1-\gamma} \ge \E_{\tau \sim \tilde\pi |s,a} \sum_{t\ge 1}\gamma^{t-1} | \E_{a \sim \tilde \pi (\cdot|s_t)} A^\pi(s_t, a)|$. From \reflemma{lem:inverse_kakade} we have that $ Q^{\tilde\pi}(s,a) - Q^\pi(s,a) = \gamma \E_{\tau \sim \tilde \pi |s, a} \sum_{t \ge 1} \gamma^{t-1} A^\pi(s_t,a_t) $. It follows that
\begin{align}
& \Big |\eta(\tilde\pi)- \eta(\pi) -  L_{\tilde\pi}(\pi) \Big | =  \\
& = \frac{1}{1-\gamma}\Big | \E_{s \sim d^\pi} \int \big (Q^{\tilde\pi}(s,a) - Q^\pi (s,a) \big) \big (\tilde\pi(a|s) - \pi(a|s) \big )\Big | da\\
& = \frac{1}{1-\gamma}\Big |\E_{s \sim d^\pi} \int \gamma\Big[ \E_{\tau \sim \tilde\pi |s,a} \sum_{t\ge 1} \gamma^{t-1} A^\pi(s_t, a_t) \Big] \big (\tilde\pi(a|s) - \pi(a|s) \big )\Big|  da\\
& \le \frac{1}{1-\gamma} \E_{s \sim d^\pi} \int \frac{\gamma\epsilon}{1-\gamma} \Big |\tilde\pi(a|s) - \pi(a|s)\Big| da\\
& = \frac{2\epsilon \gamma}{(1-\gamma)^2} \E_{s \sim d^\pi} \frac{1}{2} \int \Big|\tilde\pi(a|s) - \pi(a|s)\Big| da \\
& = \frac{2\epsilon \gamma}{(1-\gamma)^2} \E_{s \sim d^\pi} \tv \big (\tilde\pi(\cdot|s) || \pi(\cdot|s) \big ).
\end{align}
\end{proof}

\begin{proof}[Derivation of Theorem 1 from \citet{pmlr-v37-schulman15}]
Let $\epsilon = \max_{s,a} | A^\pi (s,a) |$. We note that $ \gamma |\E_{\tau \sim \tilde \pi|s,a} \sum_{t\ge1} \gamma^{t-1} A^\pi(s_t,a_t)| \le
\frac{2\epsilon\gamma}{1-\gamma}\max_s \tv \big(\tilde\pi(\cdot|s)|| \pi(\cdot|s)\big) $. It follows from: for any $s$ the expected advantage $\int \pi(a|s) A^\pi(s,a) da = 0$, so we have
$|\E_{\tau \sim \tilde \pi|s,a} \sum_{t\ge1} \gamma^{t-1} A^\pi(s_t,a_t)| =
|\E_{\tau \sim \tilde \pi|s,a} \sum_{t\ge1} \gamma^{t-1}  \big (\tilde\pi(a_t|s_t) - \pi(a_t|s_t) \big ) A^\pi(s_t,a_t)| \le 
\E_{\tau \sim \tilde \pi|s,a} \sum_{t\ge1} \gamma^{t-1} |\tilde\pi(a_t|s_t) - \pi(a_t|s_t)| |A^\pi(s_t,a_t)| \le \frac{2\epsilon}{1-\gamma}\max_s D^{TV}(\tilde\pi(\cdot|s)|| \pi(\cdot|s))$ . We follow by
\begin{align}
& \Big |\eta(\tilde\pi)- \eta(\pi) -  L_{\tilde\pi}(\pi) \Big | = \\
& = \frac{1}{1-\gamma} \Big | \E_{s \sim d^{\pi}(s)} \int \big (Q^{\tilde\pi}(s,a) - Q^\pi (s,a) \big ) \big (\tilde\pi(a|s) - \pi(a|s) \big)\Big | da  \\
& = \frac{1}{1-\gamma}\Big| \E_{s \sim d^{\pi}(s)} \int \gamma\Big[ \E_{\tau \sim \tilde\pi |s,a} \sum_{t\ge 1} \gamma^{t-1} A_\pi(s_t, a_t) \Big] \big (\tilde\pi(a|s) - \pi(a|s) \big)\Big| da\\
& = \frac{1}{1-\gamma}\Big| \E_{s \sim d^{\pi}(s)} \frac{2\epsilon\gamma}{1-\gamma} \max_s \tv \big(\tilde\pi(\cdot|s) || \pi(\cdot||s) \big) \big (\tilde\pi(a|s) - \pi(a|s) \big)\Big| \\
& \le \frac{4\epsilon \gamma}{(1-\gamma)^2} \max_s\tv \big(\tilde\pi(\cdot|s) || \pi(\cdot||s) \big)^2.
\end{align}
\end{proof}

\begin{proof}[Derivation of first inequality from Theorem 2 from \citet{NIPS2017_6974}]
We introduce the following notation target policy $\tilde\pi$, policy gathering the current batch of data $\pi$ and policy providing off-policy data $\beta$.
Denote vector $f^{\pi, \tilde\pi}$ as a vector with components $f^{\pi, \tilde\pi }(s) = \E_{a \sim \tilde \pi (\cdot|s)} A^\pi (s,a)$. Also, denote $ f_w^{\pi, \tilde\pi}$ parametric approximation to $f^{\pi, \tilde\pi}$ with coordinates $f^{\pi, \tilde\pi}(s) =  \E_{a \sim \tilde \pi (\cdot|s)} A^\pi_w(s,a)$. By using \reflemma{lem:mpie}:
\begin{align}
    &(1-\gamma)\big (\eta(\tilde\pi)-\eta(\pi) \big ) = \langle d^{\pi}, f^{\pi, \tilde\pi} \rangle + \E_{s \sim d^{\pi}} \int \big (Q^{\tilde\pi}(s,a) - Q^{\pi}(s,a)\big )\big (\tilde\pi(a|s) - \pi(a|s) \big ) da.
\end{align}

We can use the following representation for $\langle d^{\pi}, f^{\pi, \tilde\pi} \rangle$:
\begin{align}
    \langle d^{\pi}, f^{\pi, \tilde\pi} \rangle  = 
     (1-\alpha) \langle d^{\pi}, f^{\pi, \tilde\pi} \rangle +
     \alpha \langle d^{\pi}, f^{\pi, \tilde\pi} - f_w^{\pi, \tilde\pi} \rangle + \alpha \langle d^{\beta}, f_w^{\pi, \tilde\pi} \rangle + \alpha \langle d^{\pi} - d^{\beta}, f_w^{\pi, \tilde\pi} \rangle.
\end{align}

Following \cite{NIPS2017_6974} define $\tilde L_{\pi, \beta}(\tilde\pi): = (1-\alpha) \frac{1}{1-\gamma}\E_{s \sim d^{\pi}, a \sim \tilde\pi (\cdot|s)} A^\pi (s,a) + 
\alpha \frac{1}{1-\gamma}\E_{s \sim d^{\beta}, a \sim \tilde\pi (\cdot|s)}  A_w^\pi (s,a)$ and we denote $\delta:= \max_{s,a} |A^{\pi}(s,a) - 
\tilde A_w^{\pi}(s,a)|$, $\zeta := \max_s |f^{\pi, \tilde\pi}(s)|$ and $\epsilon := \max_s |f_w^{\pi, \tilde\pi}(s)|$. We then have that
\begin{align}
    &(1-\gamma)\big (\eta(\tilde\pi)-\eta(\pi) \big )-  \tilde L_{\pi, \beta}(\tilde\pi) \big)  =  \alpha \langle d^\pi, f^{\pi, \tilde\pi} - f_w^{\pi, \tilde\pi} \rangle + \alpha \langle d^\pi - d^\beta, f_w^{\pi, \tilde\pi} \rangle\nonumber\\  
    &+ \E_{s \sim d^\pi} \int \big (Q^{\tilde\pi}(s,a) - Q^{\pi}(s,a) \big) \big (\tilde\pi(a|s) - \pi(a|s) \big ) da.
\end{align}
and
\begin{align}
     & \eta(\tilde\pi)-\eta(\pi) - \tilde L_{\pi, \beta}(\tilde\pi) = \frac{1}{1-\gamma}
     \Big [ \alpha \langle d^{\pi}, f_w^{\pi, \tilde\pi} - f^{\pi, \tilde\pi} \rangle + \alpha \langle d^{\pi} - d^{\beta}, f^{\pi, \tilde\pi} \rangle \nonumber \\
     &+ \E_{s \sim d^{\pi}} \int \big(Q^{\tilde\pi}(s,a) - Q^{\pi}(s,a) \big ) \big (\tilde\pi(a|s) - \pi(a|s) \big )  da \Big ].
\end{align}

We now bound separate terms: $|\langle d^{\pi}, f_w^{\pi, \tilde\pi} - f^{\pi, \tilde\pi} \rangle | \le \delta$ and $|\E_{s \sim d^{\pi}} \int (Q^{\tilde\pi}(s,a) - Q^{\pi}(s,a) \big ) \big (\tilde\pi(a|s) - \pi(a|s) \big ) da| \le \zeta \E_{s \sim d^{\pi}} \int  |\tilde\pi(a|s) - \pi(a|s) \big | da $. By applying Holder inequality to $\langle d^{\pi} - d^{\beta}, f^{\pi, \tilde\pi} \rangle$ we obtain $|\langle d^{\pi} - d^{\beta}, f^{\pi, \tilde\pi} \rangle| \le || d^{\pi} - d^{\beta}||_{1} ||f^{\pi, \tilde\pi} ||_{\infty}$. By using Lemma 3 from Appendix in \citet{pmlr-v70-achiam17a} we get $|| d^{\pi} - d^{\beta}||_{1} \le \frac{2\gamma}{1-\gamma} \E_{s \sim d^\pi} D^{TV}(\pi(\cdot|s)||\beta(\cdot|s))$. As a result $|\langle d^{\pi} - d^{\beta}, f^{\pi, \tilde\pi} \rangle| \le \frac{2\gamma \zeta}{1-\gamma} \E_{s \sim d^\pi} D^{TV}(\pi(\cdot|s)||\beta(\cdot|s))$. Combining these inequalities yields:
\begin{align}
    &\Big | \eta(\tilde\pi)-\eta(\pi) - \tilde L_{\pi, \beta}(\tilde\pi) \Big | \le 
    \frac{1}{1-\gamma} \Big [ \alpha \delta  
    + \frac{2 \alpha \epsilon \gamma}{1-\gamma}\E_{s \sim d^{\pi}} D^{TV}(\tilde\pi(\cdot|s)|| \pi (\cdot|s)) 
    + \frac{2 \gamma \zeta }{1-\gamma} \E_{s \sim d^{\pi}} D^{TV}(\pi(\cdot|s)|| \beta (\cdot|s)) \Big ].
\end{align}

We can then use the inequality $\E_{s \sim d^\pi} D^{TV} (\pi(\cdot|s)||\tilde\pi (\cdot|s)) \le \sqrt{\frac{1}{2}\E_{s \sim d^\pi} D^{KL}(\pi(\cdot|s)||\tilde\pi (\cdot|s))}$ to obtain
\begin{align}
    &\Big | \eta(\tilde\pi)-\eta(\pi) - \tilde L_{\pi, \beta}(\tilde\pi) \Big | \le 
    \frac{1}{1-\gamma} \Big [ \alpha \delta  
    +  \frac{1}{\sqrt 2} \frac{2 \alpha \epsilon \gamma}{1-\gamma} \sqrt{\E_{s \sim d^{\pi_1}} D^{KL}(\tilde\pi(\cdot|s)|| \pi (\cdot|s))} +  \frac{2 \gamma \zeta}{\sqrt 2} \frac{1}{1-\gamma} \sqrt{ \E_{s \sim d^{\pi_1}} D^{KL}(\pi(\cdot|s)||, \pi_2 (\cdot|s)) }\Big ].
\end{align}

Using $\E_{s \sim d^\beta } D^{KL} (\pi(\cdot|s)||\tilde\pi (\cdot|s)) \le \max_s D^{KL} (\pi(\cdot|s)||\tilde\pi (\cdot|s))$ for any policy $\beta$ and upper bounding $\frac{1}{\sqrt 2} < 1$ we obtain the bound from \citet{NIPS2017_6974}:
\begin{align*}
    &\Big | \eta(\tilde\pi)-\eta(\pi) - \tilde L_{\pi, \beta}(\tilde\pi) \Big | \le 
    \frac{1}{1-\gamma} \Big [ \alpha \delta  
    +  \frac{2 \alpha \epsilon \gamma}{1-\gamma} \sqrt{\max_s D^{KL}(\tilde\pi(\cdot|s)|| \pi (\cdot|s))} + \frac{2 \gamma \zeta }{1-\gamma} \sqrt{ \max_s D^{KL}(\pi(\cdot|s)|| \beta (\cdot|s)) }\Big ].
\end{align*}
\end{proof}

\begin{proof}[Derivation of second inequality from Theorem 2 from \citet{NIPS2017_6974}]
To prove equality two from Theorem 2 we define $\tilde L_{\pi, \beta}^{CV}(\tilde\pi): = (1-\alpha) \frac{1}{1-\gamma}\E_{s \sim d^{\pi}, a \sim \tilde\pi (\cdot|s)} (A^\pi (s,a)-A^\pi_w (s,a)) + \frac{1}{1-\gamma}\E_{s \sim d^{\beta}, a \sim \tilde\pi(\cdot|s)}  A_w^\pi (s,a)$.
We can use the following representation for $\langle d^{\pi}, f^{\pi, \tilde\pi} \rangle$:
\begin{align}
    \langle d^{\pi}, f^{\pi, \tilde\pi} \rangle  = 
     (1-\alpha) \langle d^{\pi}, f^{\pi, \tilde\pi} - f_w^{\pi, \tilde\pi}\rangle +
     \langle d^{\beta}, f_w^{\pi, \tilde\pi} \rangle
     + \alpha \langle d^{\pi}, f^{\pi, \tilde\pi} - f_w^{\pi, \tilde\pi} \rangle
     +  \langle d^{\pi} - d^{\beta}, f_w^{\pi, \tilde\pi} \rangle.
\end{align}

We can bound separate terms as follows: $ |\langle d^{\pi}, f^{\pi, \tilde\pi} - f_w^{\pi, \tilde\pi}\rangle| \le \delta $, we use again $|\langle d^{\pi} - d^{\beta}, f^{\pi, \tilde\pi} \rangle| \le \frac{2\gamma \zeta}{1-\gamma} \E_{s \sim d^\pi} D^{TV}(\pi(\cdot|s)||\beta(\cdot|s))$. By applying \reflemma{lem:mpie} it follows that:
\begin{align}
    \eta(\tilde\pi)-\eta(\pi) - \tilde L^{CV}_{\pi, \beta}(\tilde\pi) = \frac{1}{1-\gamma}
     \Big [ \alpha \langle d^{\pi}, f^{\pi, \tilde\pi} - f_w^{\pi, \tilde\pi} \rangle
     +  \langle d^{\pi} - d^{\beta}, f_w^{\pi, \tilde\pi} \rangle
     + \E_{s \sim d^{\pi}} \int \big (Q^{\tilde\pi}(s,a) - Q^{\pi}(s,a) \big ) \big (\tilde\pi(a|s) - \pi(a|s) \big ) da \Big ].
\end{align}

As previously we can use $|\E_{s \sim d^{\pi}} \int (Q^{\tilde\pi}(s,a) - Q^{\pi}(s,a) \big ) \big (\tilde\pi(a|s) - \pi(a|s) \big ) da| \le \frac{2 \gamma \zeta }{1-\gamma} \sqrt{ \max_s D^{KL}(\tilde\pi(\cdot|s)|| \pi (\cdot|s)) }$ and $| \langle d^{\pi} - d^{\beta}, f_w^{\pi, \tilde\pi} \rangle| \le \frac{2 \epsilon \gamma}{1-\gamma} \sqrt{\max_s D^{KL}(\tilde\pi(\cdot|s)|| \beta (\cdot|s))}$ to obtain:
\begin{align}
    &\Big | \eta(\tilde\pi)-\eta(\pi) - \tilde L^{CV}_{\pi, \beta}(\tilde\pi) \Big | \le \frac{1}{1-\gamma} \Big [ \alpha \delta
    +  \frac{2 \epsilon \gamma}{1-\gamma} \sqrt{\max_s D^{KL}(\tilde\pi(\cdot|s)|| \pi (\cdot|s))} + \frac{2 \gamma \zeta }{1-\gamma} \sqrt{ \max_s D^{KL}(\pi(\cdot|s)|| \beta (\cdot|s)) }\Big ].
\end{align}
\end{proof}

\subsection{Experimental setup}

To ensure meaningful comparison, we alter only the PPO policy optimization objective, switching accordingly to Table 1 and keeping any other hyperparameters or parts of the experimental setup unchanged. To parametrize the policy, we use two layer hidden network with tanh activations outputing the mean of Gaussian distribution over actions. The policy standard deviation is parametrized, but state independent. We use two hidden layer neural network for the value function which we learn by minimising the square loss of predicted values with empirical returns. We use $2048$ transitions to perform the policy update. 
\\
\\
To optimize policies we use ADAM optimizer \citep{kingma2014adam} with learning rate set to $3 \cdot 10^{-4}$ and $\epsilon=10^{-5}$, keeping other default Adam parameters. For every policy update we perform $320$ minibatch optimization steps with the minibatch size of $64$. We use the clip range of $0.2$. For GAE critic, we use standard values of discount $\gamma=0.99$ and lambda factor $\lambda = 0.95$. We decay the clip range and learning rate linearly with the passed time steps from the initial values to zeros. We do not use entropy exploration bonus in the course of experiments. We do not perform any environment specific tuning of hyperparameters; we keep hyperparameters fixed during the experimentation. We use only one actor to gather the data required for the policy update. For Humanoid Mujoco experiment we use slightly lower clip value of $0.1$ and constant learning rate schedule. We normalize the state by the moving average of mean and standard deviation.
\\
\\
While performing experiments on Mujoco environments we use standard PPO parameters as listed above. Using environment specific parameters allows to obtain better results, however in this experiment we are aiming to test the robustness of the algorithm.
\\
\\
We report the moving average return over last $200$ episodes as a function of interactions with simulator. We average learning curves over $10$ seeds for Mujoco Humanoid, $20$ seeds for other Mujoco environments and $8$ seeds for Roboschool Humanoid. We based our implementation on publicly available OpenAI Baselines code \citep{baselines}. We open source the code used to run the experiments: \url{https://github.com/marctom/POTAIS}.

\subsection{Further experimental results}

We report comparison of the learning curves for additional settings of $\alpha_t$ and numerical values obtained during experimentation. As discussed setting $\alpha_t \ne (0, \ldots, 0, 1)$ provides visible improvements on several environments. Introducing further non-zero components provides gains in results for first and second component but not for third introduced component.

\begin{table}[ht!]
\tiny
\centering
\begin{tabular}{cccccccc}
\toprule
$\alpha_t$ &     Standup &              Ant &           Hopper &         Humanoid &        Swimmer &         Walker2d &      HalfCheetah\\
\midrule
1.0, 1.0, 1.0      &   $97751.61\pm3035.67$ &   $3240.09\pm178.5$ &  $2306.52\pm202.03$ &  $4593.64\pm174.61$ &   $109.85\pm6.94$ &   $3492.56\pm274.9$ &  $1747.97\pm414.25$ \\
0.0, 1.0, 1.0      &   $97620.58\pm2632.96$ &  $3482.96\pm271.26$ &  $2241.08\pm176.55$ &   $4744.86\pm96.31$ &   $107.67\pm9.02$ &   $3648.88\pm246.6$ &  $1869.14\pm428.66$ \\
0.0, 0.0, 1.0      &   $84469.91\pm1715.18$ &  $2833.46\pm445.97$ &  $2171.21\pm242.25$ &   $3241.8\pm226.38$ &   $98.94\pm11.83$ &  $3814.08\pm245.67$ &  $1764.34\pm310.65$ \\
0.0, 0.0, 0.5      &   $75367.84\pm2344.67$ &   $569.62\pm122.73$ &  $2205.87\pm246.24$ &  $1709.43\pm262.64$ &  $103.85\pm10.69$ &  $2527.38\pm479.23$ &  $2446.52\pm602.89$ \\
0.0, 0.5, 1.0      &   $89588.09\pm1885.68$ &  $3268.42\pm347.93$ &   $2188.41\pm215.1$ &  $4427.96\pm288.42$ &   $113.23\pm6.56$ &   $3850.1\pm224.68$ &  $2044.55\pm484.02$ \\
0.5, 0.5, 1.0      &   $96514.29\pm3524.44$ &   $3529.4\pm192.12$ &   $2112.9\pm214.03$ &  $4741.39\pm235.57$ &   $110.29\pm9.41$ &  $3728.18\pm218.05$ &  $2175.29\pm510.66$ \\
0.5, 0.5, 0.5      &  $100208.41\pm4092.96$ &  $3204.55\pm208.36$ &  $2281.64\pm199.03$ &  $4618.06\pm165.57$ &    $105.0\pm6.96$ &  $3639.47\pm276.92$ &  $2184.15\pm481.82$ \\
0.0, 0.0, 0.25     &   $73079.89\pm2007.63$ &     $46.07\pm23.06$ &  $1375.32\pm348.86$ &     $745.95\pm36.9$ &   $91.84\pm16.29$ &   $566.21\pm116.52$ &  $2688.29\pm533.54$ \\
0.25, 0.5, 1.0     &   $93785.21\pm2962.43$ &  $3340.99\pm385.43$ &  $2111.29\pm214.28$ &  $4657.25\pm176.76$ &    $113.9\pm6.66$ &  $3824.08\pm246.22$ &   $1604.46\pm261.9$ \\
0.25, 0.25, 0.25   &   $89172.19\pm2127.32$ &  $3396.38\pm121.09$ &  $2220.28\pm201.53$ &  $3741.41\pm141.29$ &   $108.15\pm3.44$ &  $3423.39\pm217.89$ &  $2477.07\pm551.84$ \\
0.5, 0.5, 0.5, 1.0 &  $100912.25\pm3823.57$ &  $3509.53\pm161.82$ &  $2124.92\pm135.99$ &  $4810.98\pm200.86$ &   $113.66\pm5.76$ &  $3871.68\pm187.82$ &  $1820.97\pm416.67$ \\
\midrule
\end{tabular}

\caption{Mean scores obtained for Mujoco experiments with standard deviation error bars averaged over $20$ seeds.}
\end{table}

\begin{table}[ht!]
\centering
\small
\begin{tabular}{cc}
\toprule
$\alpha_t$ &    $\frac{\eta(\pi_{AISPO})}{\eta(\pi_{PPO})}$  \\
\midrule
$(0, \ldots, 0.0 , 0.0, 0.25 )$    &  $0.73\pm0.12$ \\
$(0, \ldots, 0.0 ,0.0, 0.0, 0.5 )$      &  $0.87\pm0.12$ \\
$(0, \ldots, 0.0 ,0.0, 0.0, 1.0 )$      &    $1.0\pm0.1$ \\
$(0, \ldots, 0.0 ,0.0, 0.5, 1.0  )$     &    $1.1\pm0.1$ \\
$(0, \ldots, 0.0 ,0.0, 1.0, 1.0  )$     &  $1.13\pm0.09$ \\
$(0, \ldots, 0.0 ,0.5, 0.5, 1.0  )$     &   $\bf {1.14\pm0.1}$ \\
$(0, \ldots, 0.0 ,0.5, 0.5, 0.5   )$    &   $\bf {1.14\pm0.1}$ \\
$(0, \ldots, 0.0 ,0.25, 0.5, 1.0   )$   &   $1.1\pm0.09$ \\
$(0, \ldots, 0.0 ,0.25, 0.25, 0.25 )$   &  $1.11\pm0.09$ \\
$(0, \ldots, 0.0 ,1.0, 1.0, 1.0   )$    &  $1.11\pm0.09$ \\
$(0, \ldots, 0.0 ,0.5, 0.5, 0.5, 1.0 )$ &  $1.13\pm0.08$ \\
\bottomrule

\end{tabular}
\caption{Mean relative improvement over PPO for different choices of $\alpha_t$ with standard deviation error bars averaged over $20$ seeds.}
\end{table}

We present learning curves for different choices of $\alpha_t$.

\begin{figure}[ht!]
\hspace*{-0.25cm}                                                           
\includegraphics[width=170mm,height=70mm]{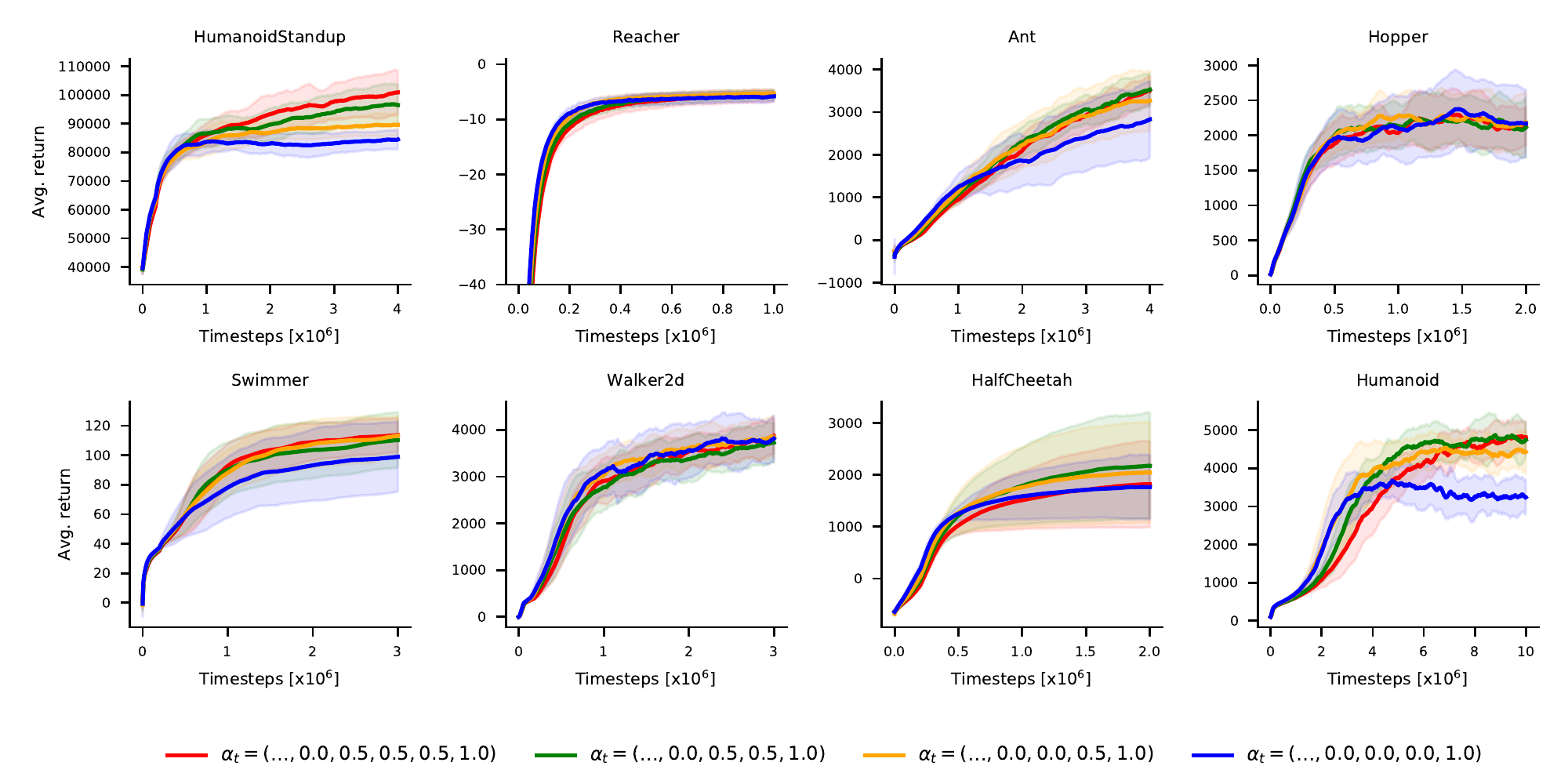}
\caption{The effect of adding non-zero components $\beta$ when $\alpha_t=(0, \ldots, 0.0, \beta, \ldots,\beta, 1)$. Adding first and second non-zero component provides visible improvements in performance but adding third component does not improve mean performance across environments. Blue curve corresponds to PPO.}
\end{figure}

\begin{figure}[ht!]
\hspace*{-0.25cm}                                                           
\includegraphics[width=170mm,height=80mm]{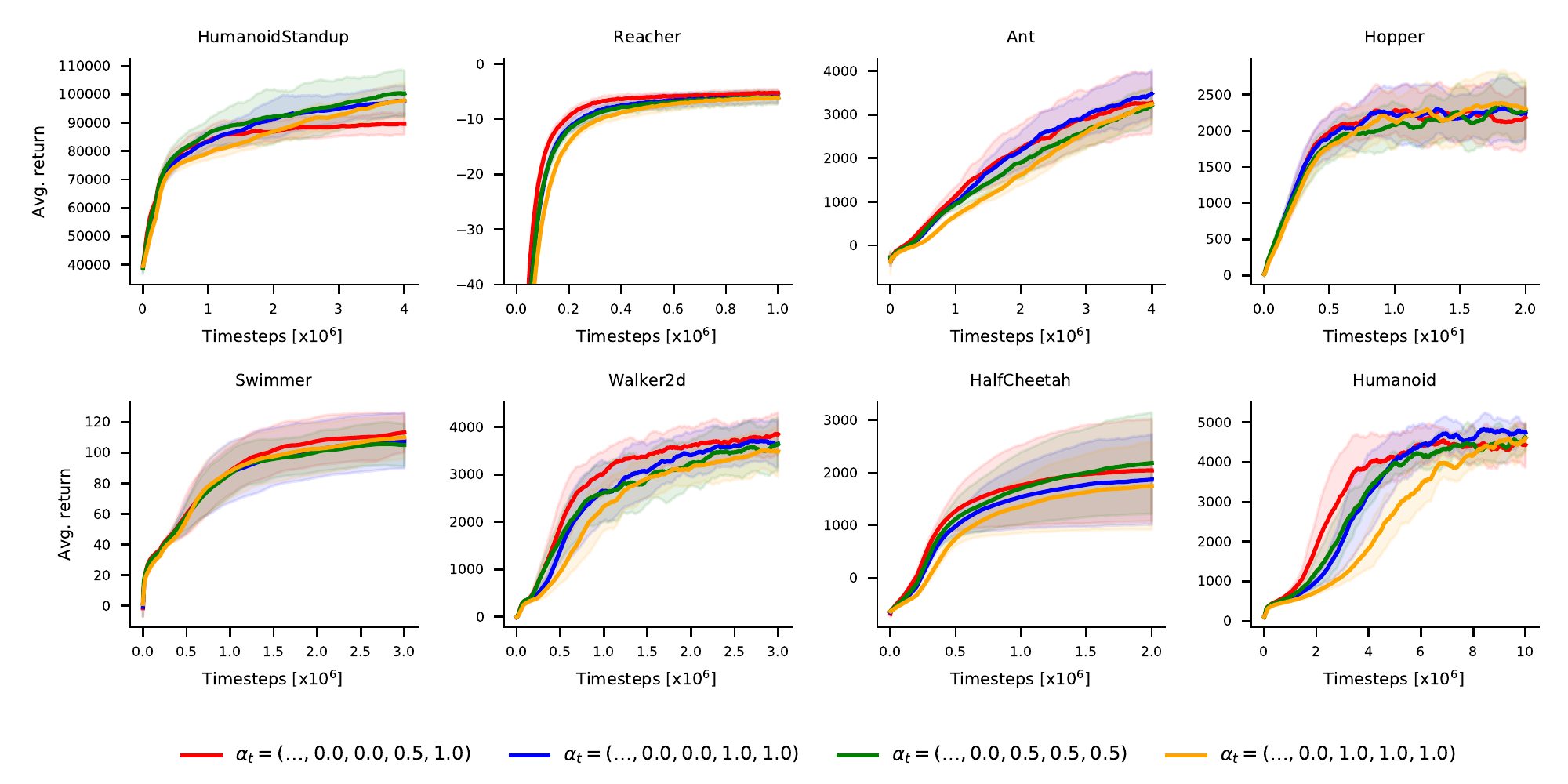}
\caption{The effect of changing values of components $\beta_1$, $\beta_2$ and $\beta_3$ when $\alpha_t=(0, \ldots, 0,\beta_1 ,\beta_2 ,\beta_3)$. All choices improve over PPO. }
\end{figure}

\begin{figure}[ht!]
\hspace*{-0.25cm}                                                           
\includegraphics[width=170mm,height=80mm]{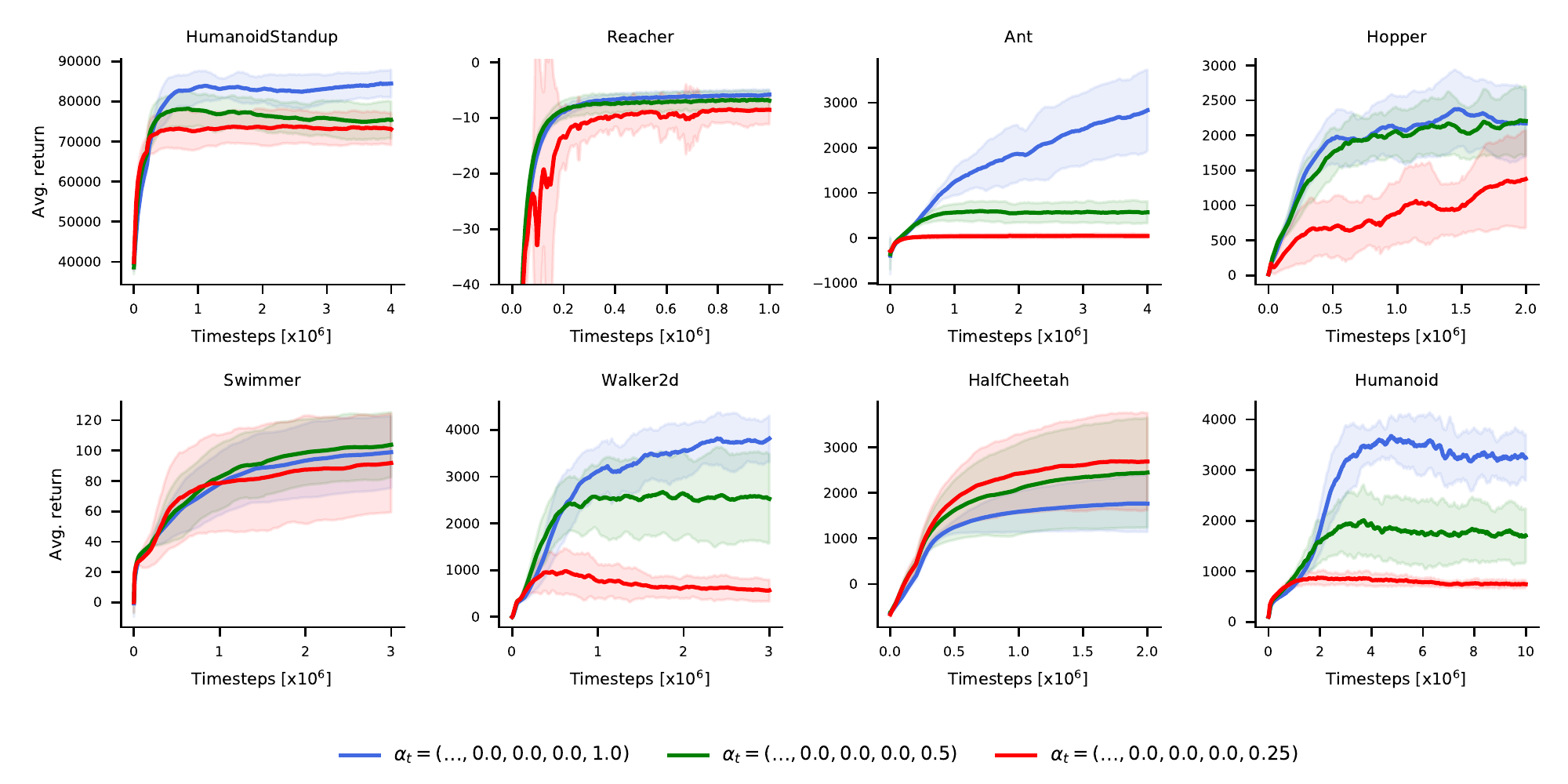}
\caption{Comparison of $\alpha_t=(0, \ldots, 0.0, \beta)$. Smoothing the last component reduces the variance but provides excessive bias. This degrades the performance on most environments. Interestingly the performance on HalfCheetah environment improves.}
\end{figure}

\end{document}